\theoremstyle{thmstyleone}%
\theoremstyle{thmstyletwo}%
\theoremstyle{thmstylethree}%
\theoremstyle{plain}
\newtheorem{thm}{Theorem}[section]
\newtheorem{lem}[thm]{Lemma}
\newtheorem{assump}{Assumption}
\newtheorem{coro}[thm]{Corollary}
\newtcolorbox[list inside=prompt,auto counter,number within=section]{prompt}[1][]{
    colbacktitle=black!60,
    coltitle=white,
    fontupper=\footnotesize,
    boxsep=5pt,
    left=0pt,
    right=0pt,
    top=0pt,
    bottom=0pt,
    boxrule=1pt,
    title={#1},
    #1, 
}
\newtcolorbox[list inside=CASE, auto counter, number within=section]{CASE}[1][]{
    colbacktitle=black!60,
    coltitle=white,
    fontupper=\footnotesize,
    boxsep=5pt,
    left=0pt,
    right=0pt,
    top=0pt,
    bottom=0pt,
    boxrule=1pt,
    title={#1},
    #1, 
}
\begin{document}
\title[Article Title]{CMCTS: A Constrained Monte Carlo Tree Search Framework for Mathematical Reasoning in Large Language Model}

\author[1]{\fnm{Qingwen} \sur{Lin}}\email{qingwen\_lin@foxmail.com}

\author[1]{\fnm{Boyan} \sur{Xu}}\email{hpakyim@gmail.com}
\author[2]{\fnm{Guimin} \sur{Hu}}\email{guhu@di.ku.dk}

\author[1]{\fnm{Zijian} \sur{Li}}\email{leizigin@gmail.com}

\author[1,3]{\fnm{Zhifeng} \sur{Hao}}\email{haozhifeng@stu.edu.cn}

\author[4]{\fnm{Keli} \sur{Zhang}}\email{zhangkeli1@huawei.com}

\author*[1,5]{\fnm{Ruichu} \sur{Cai}}\email{cairuichu@gmail.com}\equalcont{Corresponding author}

\affil*[1]{\orgdiv{School of Computer Science}, \orgname{Guangdong University of Technology}}
\affil[2]{\orgdiv{Department of Computer Science }, \orgname{University of Copenhagen}}
\affil[3]{\orgdiv{College of Science}, \orgname{Shantou University}}

\affil[4]{\orgdiv{Huawei Noah’s Ark Lab}}

\affil[5]{\orgdiv{Peng Cheng Laboratory}}

\abstract{
This paper introduces the Constrained Monte Carlo Tree Search (CMCTS) framework to enhance the mathematical reasoning capabilities of Large Language Models (LLM). By incorporating a constrained action space, Process Reward Model (PRM), and partial order rules, CMCTS effectively addresses the limitations of existing MCTS methods in terms of state space diversity and action selection rationality. Specifically, during the expansion phase, CMCTS restricts action sampling to a predefined constrained action set to increase candidate state diversity. In the simulation phase, it introduces partial order rules and PRM to optimize action selection and prevent unreasonable state transitions. Experimental results show that CMCTS performs outstandingly across multiple mathematical reasoning benchmarks. Under a zero-shot setting, a 7B-parameter model achieves an average accuracy of 83.4\%, surpassing the 72B baseline model by 4.8\%. Ablation studies demonstrate that each component of the framework is crucial for performance improvement, and their combined use fully leverages their respective strengths. Overall, the CMCTS framework provides an effective approach to enhancing LLM mathematical reasoning capabilities, supported by theoretical analysis, and offers novel insights for future reasoning tasks.}

\keywords{ LLM,NLP,MCTS,Mathematical Reasoning}

\maketitle

\section{Introduction}\label{sec:intro}

Improving the reasoning ability, especially the
mathematical reasoning ability, occupies a central
position in current large language models (LLM)
research. The Chain of Thought (CoT)\cite{wang2022self} technique has emerged as a mainstream solution to enhance LLM’ reasoning ability in a step-by-step manner. Recently, the generation of Long Chains of Thought (Long COT)\cite{guo2025deepseek,zhong2024evaluation,team2024qwq,team2025kimi} has led to significant performance improvements. 

Compared to CoT, Long CoT introduces two critical enhancements that significantly improve the reasoning ability of large language models, particularly in complex mathematical tasks:
(i) \textit{Semantic diversity.} Long CoT expands the model's exploration space by enabling the generation of diverse intermediate steps, alternative solution strategies, and multi-level decompositions\cite{guo2025deepseek,zhong2024evaluation,team2024qwq,team2025kimi}. This diversity allows the model to avoid early commitment to a potentially suboptimal reasoning path and instead consider multiple plausible approaches in parallel.
(ii) \textit{Reflective self-verification.} Unlike CoT, which typically proceeds in a fixed, linear fashion, Long CoT encourages the model to revisit earlier steps, check the consistency or correctness of intermediate results, and revise its reasoning path as needed.

While these capabilities make Long CoT a powerful paradigm, they are difficult to elicit reliably in practice. Without reinforcement learning or instruction tuning, most LLM-such as Qwen or LLaMA-struggle to spontaneously generate diverse, self-correcting reasoning chains in zero-shot settings~\cite{gandhi2025cognitive, yue2025does}. As a result, the generated solutions often collapse into shallow, repetitive patterns lacking depth and verification. This motivates the need for external control mechanisms to guide the model toward more structured and reliable reasoning behavior.

In reasoning-strategy-optimized methods, tree search algorithms are considered highly effective in enhancing the mathematical reasoning capabilities of LLM \cite{yao2023tree, sun2024beats, hao2023reasoning}, particularly Monte Carlo Tree Search (MCTS) and its numerous variants \cite{hao2023reasoning, feng2023alphazero, qi2024mutual}. 
MCTS constructs chains of thought through four key stages: Selection, Expansion, Simulation, and Backpropagation. During this process, MCTS leverages the model itself as a reward function to guide state transitions, enabling the generation of more optimal reasoning paths. Compared with traditional CoT methods, \cite{wei2022chain}, MCTS is a method that can significantly enhance model performance by generating higher-quality COT without requiring additional training.

However, despite the significant improvement in CoT reasoning performance that MCTS methods can achieve, they still fail to enable LLM to generate high-quality long COT. This is primarily due to the following limitations: 1) homogenization of the state space; 2) lack of effective selection of key states. Existing MCTS relies on model-generated actions for state transitions. For models that have not been trained with reinforcement learning, generating actions that can induce sufficient diversity is challenging \cite{gandhi2025cognitive, yue2025does}. Moreover, LLM as reward models often fail to accurately evaluate the quality of intermediate reasoning steps \cite{zhang2024rest, luo2024improve}, preventing them from transitioning to appropriate states.
Our theoretical analysis of the MCTS reasoning process has led to two key findings: 1) Enhancing semantic richness can exponentially reduce the error compared to the true state; 2) Even with diverse states, poor state selection can negate the advantages brought by this diversity. These findings are consistent with previous studies \cite{brown2024large,li2024common,snell2024scaling}, further indicating that sampling a diverse state space in mathematical reasoning can significantly improve performance.

\begin{figure}[ht!]
\centering
\includegraphics[width=1\textwidth]{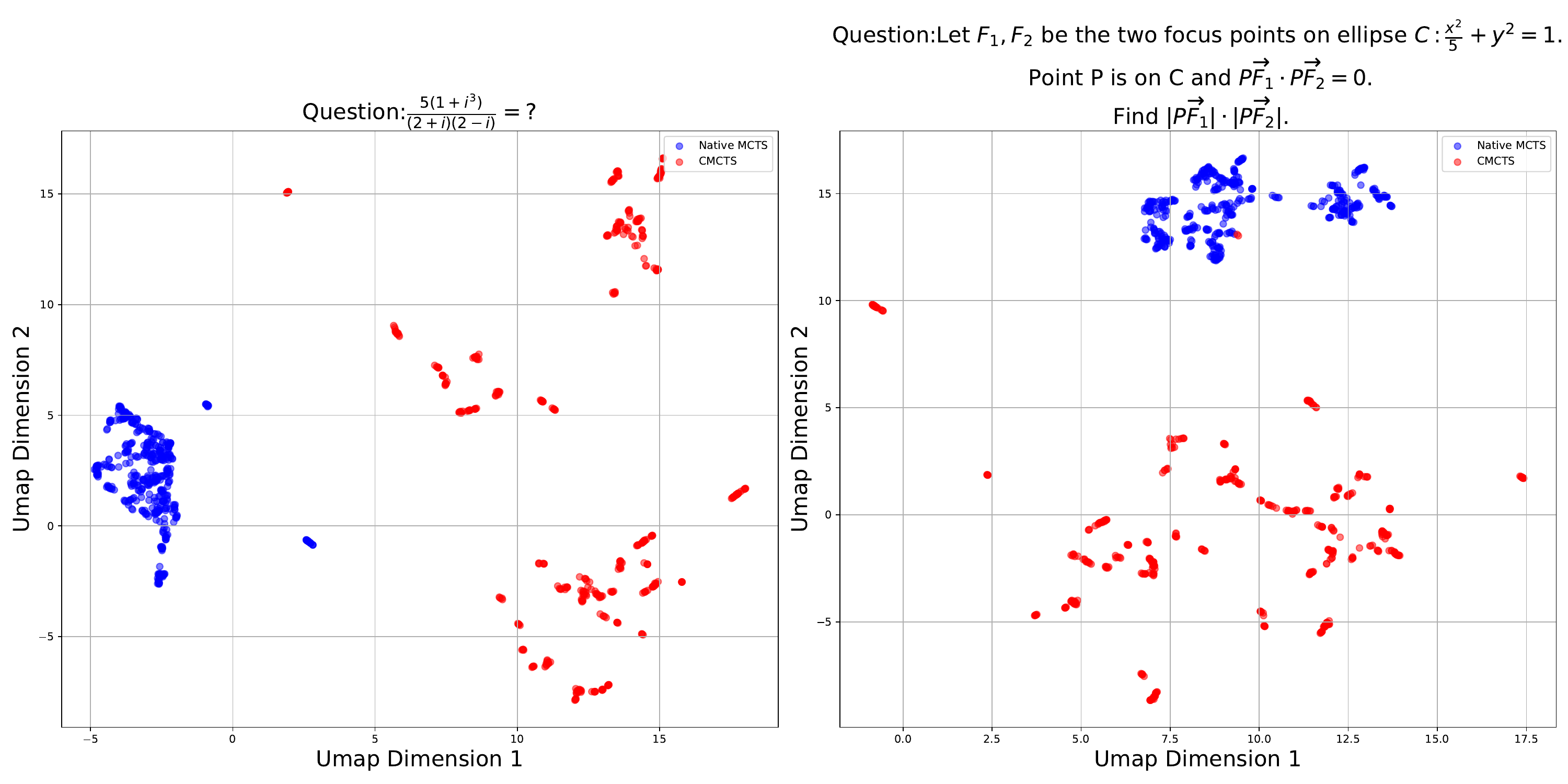}
\caption{The visualization of different states generated by Native-MCTS and CMCTS when reasoning about the same problem. }\label{fig:visual}
\end{figure}

From this perspective, we have developed the Constrained Monte Carlo Tree Search (CMCTS) framework, as shown in Figure \ref{fig:main}. First, during the expansion phase, we partition the action space into four disjoint constrain subsets: understand, reflect, code, and summary. By using constrained action-space sampling, we force the model to sample in regions of the state space that are difficult to sample otherwise, thereby enhancing state-space diversity. Figure \ref{fig:visual} illustrates that our proposed CMCTS generates a more diverse state space compared to Native-MCTS in the state spaces. Second, we introduce partial order rules and a PRM~\cite{li2024process}, aiming to enhance the rationality of the sampled actions.
Partial order rules constrain the order of actions to ensure logical and interpretable reasoning, while PRM scores different actions using pre-trained models to help select the optimal ones. These methods can be used individually or in combination to improve action-selection accuracy. Extensive experiments show that the CMCTS framework performs well in mathematical reasoning benchmarks. The 7B model equipped with CMCTS achieves an average accuracy of 83.4\%, which is 4.8\% higher than that of the 72B CoT model in terms of mathematical reasoning performance. Moreover, on the Math dataset, our method shows a significant performance improvement compared to other MCTS variants.

\begin{enumerate}
    \item Restrict model sampling to a predefined constrain action set during the expansion phase to increase candidate-state diversity and enhance MCTS performance.
    \item Design partial order rules and a PRM during the simulation phase to optimize action selection.
    \item Theoretically describe the key factors for improving the MCTS algorithm and explain how the proposed method enhances the performance of the MCTS algorithm.
    \item Enable the 7B model equipped with the CMCTS framework to generate long-text responses and outperform the 72B model across multiple mathematical reasoning benchmarks without additional training.
\end{enumerate}

\section{Related Work}
LLM have recently become the focus of the academic community due to their remarkable performance. Typical examples include the GPT series \cite{brown2020language,chen2021evaluating,nakano2021webgpt,achiam2023gpt}, the LLaMA series \cite{touvron2023llama,taori2023alpaca,young2024yi,dubey2024llama}, the Gemma series \cite{team2024gemma_a,team2024gemma_b}, the RWKV series \cite{peng2025rwkv,peng2024eagle,peng2023rwkv}, the DeepSeek series \cite{bi2024deepseek,liu2024deepseek,guo2025deepseek}, and the Qwen series \cite{yang2024qwen2b,yang2024qwen2a}. Among the many capabilities of LLM, logical reasoning, especially mathematical reasoning, is considered a core competency.

To enhance the mathematical reasoning abilities of LLM, researchers have designed various methods. For example, reinforcement learning based on human preferences \cite{li2023reinforcement,ouyang2022training,rafailov2024direct,ethayarajh2024kto}, using specialized mathematical models equipped with mathematical knowledge \cite{shao2024deepseekmath,yang2024qwen2b,ying2024internlm}, and generating long CoT (Chain of Thought) through reinforcement learning rewards \cite{guo2025deepseek,zhong2024evaluation,team2024qwq,team2025kimi} have been employed. Additionally, knowledge distillation from large models to improve the performance of smaller models \cite{lin2024large,guo2025deepseek,team2024gemma_b} and self-distillation of smaller models to enhance performance \cite{feng2023alphazero,chen2024alphamath,guan2025rstar} have also been adopted. These methods have achieved impressive results, but training LLM is typically costly and requires substantial computational resources. Therefore, when computational resources are limited, researchers seek to optimize reasoning strategies to enhance model performance. For these algorithms, we can generally categorize them into two types: \textbf{state expansion} and \textbf{step evaluation}.

\textbf{State Expansion:} LLM inherently possess the ability to generate highly diverse states, but without additional training, they struggle to sample these states \cite{gandhi2025cognitive, yue2025does}. Among these hard-to-sample states, there may lie the keys to solving problems, such as self-verification and reflection commonly used in long-CoT. Therefore, additional prior constraints are needed to ensure that LLM sample these states. Thus, the main research direction of state-expansion-based methods is to use specific search algorithms to enable LLM to generate states that cannot be directly generated. The pioneering work in this area is COT \cite{wei2022chain}, which uses prompts to induce step-by-step reasoning from LLM, thereby significantly enhancing the model's reasoning performance. RAG-related work retrieves world knowledge to constrain LLM to correctly sample world knowledge \cite{jiang2023active,xiong2024benchmarking,yu2024rankrag,zhao2025medrag}. Meta-Prompt technology \cite{guoconnecting} and NLRL \cite{feng2024natural} use past successful experiences to constrain the model to sample appropriate states based on historical experience.React \cite{yao2022react} and Rethink \cite{schwarzschild2024rethinking} constrain the model to rethink and summarize the CoT during the reasoning process to derive new answers. LLM-Modulo \cite{kambhampati2024position} employs multi-agent mechanisms to endow LLM with additional planning and comprehension capabilities. LLM can also incorporate code as a verification tool to obtain precise computational states that are otherwise hard to generate \cite{kim2023language,zelikman2024self}.

These works have generally proven that constraining model sampling with certain prior instructions to generate otherwise hard-to-sample states can lead to significant performance improvements. Therefore, in Section \ref{sec:action}, we propose sampling commands from four predefined action sets-understand, reflect, code, and summary-during the expansion phase of MCTS. Compared with native-MCTS, which relies on LLM-generated actions, this approach enriches the state space of MCTS and enhances its state diversity.

\textbf{Step Evaluation:} Existing research on LLM has demonstrated that step-by-step reasoning processes can significantly enhance reasoning performance \cite{wei2022chain}. Clearly, evaluating and selecting rational reasoning steps can improve the reasoning capabilities of LLM.
AoT \cite{selLLM} periodically regenerates and restates the current problem state during the search process, reducing the model's need to focus on the entire context history and thereby alleviating its cognitive load. R-STAR \cite{qi2024mutual} employs two LLM for collaborative answer selection. For mathematical problems, the PRM \cite{zhangopenprm,zhang2025lessons,wang2024math} has been proven to better select the optimal answer by evaluating reasoning states. Agent-based work, through manually set rules, allows different agents to perform optimally at appropriate times to enhance reasoning performance \cite{leimacm,guo2024large,zhou2024star}. Among these works, tree-search-based methods, especially MCTS-related works \cite{hao2023reasoning,wu2024beyond,park2024ensembling,zhang2024llama}, have garnered widespread attention due to their excellent performance. TOT \cite{yao2023tree} views the problem-solving process as a search within a tree of thoughts, where each node represents a partial solution, and significantly improves reasoning performance through optimal state selection. Native-MCTS \cite{hao2023reasoning}, as the pioneering work of MCTS in LLM, uses the LLM itself as a reward model to select the best reasoning steps and achieves significant performance improvements. 

However, existing work has shown that using an LLM as a reward model is not an optimal setup \cite{zhang2024rest, luo2024improve} and may lead to sub-optimal paths. Therefore, we use PRM and partial order rules to ensure correct step evaluation.

\begin{figure}[ht]
\centering
\includegraphics[width=1\textwidth]{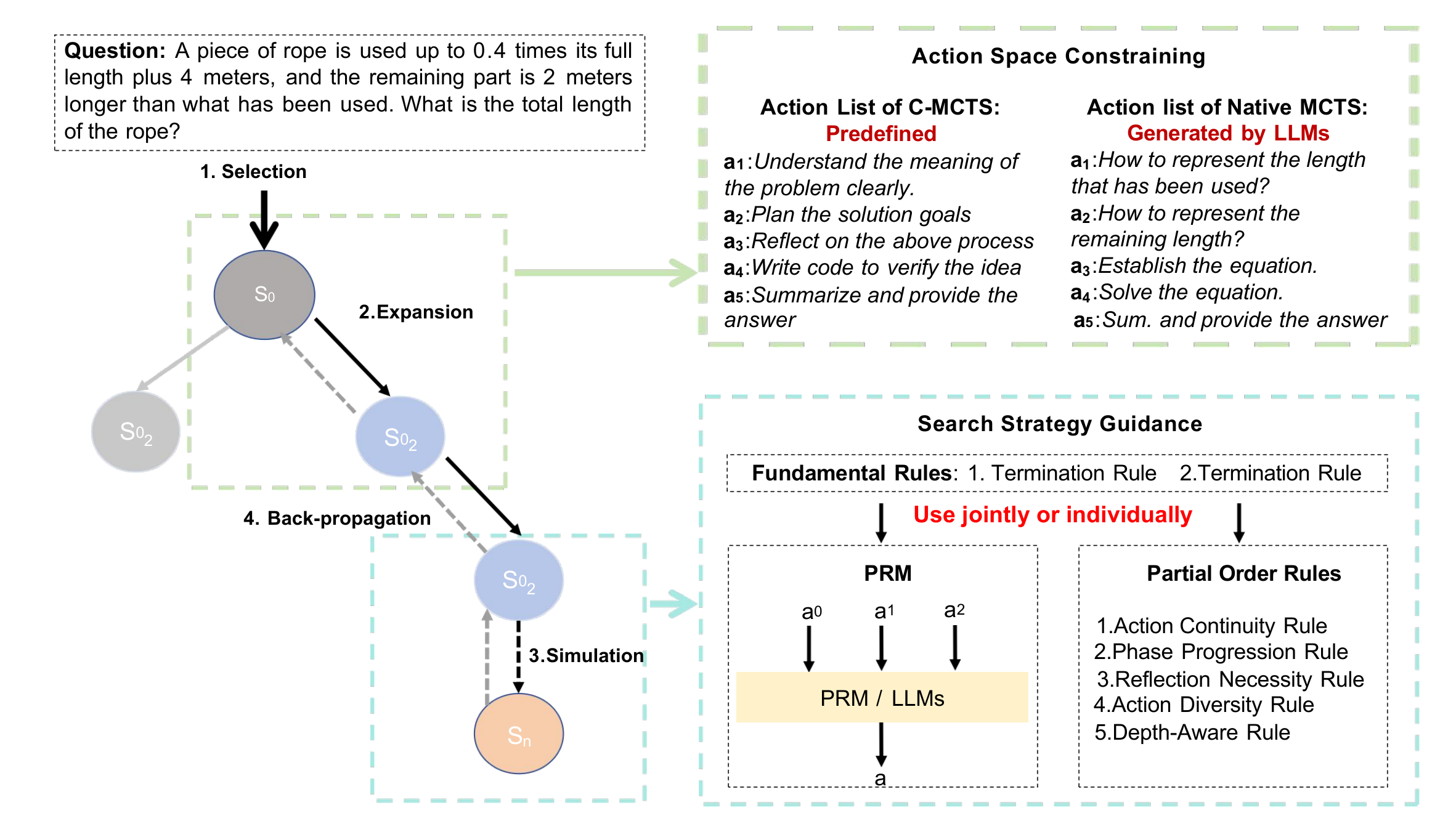}
\caption{A overview of our CMCTS framework  }\label{fig:main}
\end{figure}

\section{Method}

\subsection{The CMCTS Framework}\label{sec:framework}
Our method aims to address the challenge that existing MCTS methods cannot enable models like Qwen to generate Long  without additional training. As show in Figure \ref{fig:main}, the CMCTS framework consists of four main phases: Selection, Expansion, Simulation, and Back-propagation.

\subsubsection{Problem Definition}
We address the challenge of generating reasoning chains for mathematical problems by formulating it as a constrained Markov Decision Process (MDP). This approach aims to overcome the limitations of existing MCTS algorithms and generate more effective and higher-quality reasoning chains. The specific definition is as follows:

Let \( Q \) be the space of mathematical problems, and \( \Pi \) the family of LLM. Given a problem instance \( q \in Q \) and a reasoning strategy \( \pi \in \Pi \), the reasoning process can be modeled as an MDP tuple \((\mathcal{S}, \mathcal{A}, \mathcal{T}, \mathcal{R})\), where:

\begin{itemize}[leftmargin=0cm, itemindent=0cm]
    \item \(\mathcal{S}\) is the state space, representing the reasoning context at each step \( t \). The initial state \( s_0 \) is initialized by the problem instance \( q \).
    \item \(\mathcal{A}\) is the action space. In MCTS, actions are obtained before state transitions, and these actions determine the direction of state transitions.
    \item \(\mathcal{T}\) is the transition function, defined as \( s_{t+1} = \pi(s_t, a_{t}) \). Based on the current state \( s_t \) and the selected action \( a_{t} \), the LLM generates the next state \( s_{t+1} \).
    \item \(\mathcal{R}\) is the reward function, obtained through a reward model, providing supervision signals to enable optimal state transitions.
\end{itemize}




In this framework, forward reasoning is modeled as a sequence of transitions \( s_0 \rightarrow s_1 \rightarrow \dots \rightarrow s_t \), guided by actions \( a_1, a_2, \dots, a_t \). Each complete trajectory corresponds to a candidate solution to the problem. The ultimate goal of MCTS is to find one or more high-quality terminal states \(s_t\) that represent coherent and correct reasoning outcomes.

\subsubsection{Selection}

During the selection phase, we aim to identify the most promising part of the current search tree for further expansion. We employ the UCT (Upper Confidence bound for Trees) formula to balance exploration and exploitation. Specifically, the selection of each node \( s_t \) is based on the following formula:
\begin{equation}
\label{eq:select}
s_t = \arg\max_a \left( \text{mean}(R_{t}) + c_{uct} \sqrt{\frac{\log N_t}{N_t}} \right),
\end{equation}
where \( R_{t} \) represents the cumulative reward of the node, \( c_{uct} \) is a constant that balances exploration and exploitation, and \( N_t \) is the visit count of the current node.

\subsubsection{Expansion}

In the expansion phase, we sample actions from a predefined action set \(\mathcal{A}\) using a reward model to evaluate the potential of each action. To determine the most suitable action for the current state, we assess the reward of all potential actions using the reward model and select the action with the highest reward as \( a_t \). This selection process can be formally expressed as:
\begin{equation}
\label{eq:get_action}
a_t = \arg\max_{a_t^n \in \mathcal{A}} Q(s_t, a_t^n)
\end{equation}

After obtaining the action \( a_t \), we use the LLM \(\pi\) to generate multiple candidate states. To ensure diversity, we generate \( m \) candidate states and select the one with the highest V-value:
\begin{equation}
\label{eq:v-select}
s_{t+1} = \arg\max_{s_{t+1}^j \in \pi(s_t, a_{t}) } V(s_{t+1}^j),
\end{equation}
where \(\pi(s_t, a_{t}) \) is the set of candidate states generated by the LLM.\textbf{For the Expansion phase}, we propose in Section \ref{sec:action} that actions are sampled from a predefined constrained action space rather than being generated by the LLM.

\subsubsection{Simulation}
During the simulation phase, we repeatedly perform the expansion phase until a terminal node is reached, thereby generating a complete reasoning trajectory.
\begin{equation}
    s_{t+1} = \pi(s_t, a_t), \quad \text{until} \quad s_{t+1} \in \mathcal{S}_{\text{terminal}}
\end{equation}

\textbf{In the Simulation phase}, to help the model select proper actions and avoid transitions to inappropriate states, we introduce partial order rules and the Process Reward Model (PRM) as detailed in Section \ref{sec:guidance}. By combining human-like partial ordering with the PRM, our search strategy imposes constraints on the order of action execution. This ensures that actions are performed in a logical and interpretable sequence, guaranteeing high-quality reasoning throughout the process.

\subsubsection{Back-propagation}

In the back-propagation phase, we update the information of all nodes along the reasoning trajectory \([s_0, a_1, s_1, a_2, \dots, a_n, s_n]\), excluding the root node. We update the visit count \( N_t \) for each node and compute the new reward using the reward and V-value models:
\begin{equation}
\label{eq:reward}
\text{r}_t = Q(s_{t-1}, a_t) + V(s_t)
\end{equation}
The cumulative reward for each node is updated as:
\begin{equation}
\label{eq:cumulative_reward}
R_{t} \coloneqq \frac{ \sum_{N_t} \left( \sum_{i=t}^{T} \text{r}_t \right)}{N_t}
\end{equation}

The MCTS algorithm generates \( k \) subtrees corresponding to \( k \) reasoning trajectories through \( k \) iterations. After \( k \) iterations, we obtain \( k \) candidate reasoning trajectories \(\mathcal{P}_k\). We then design a voting-based aggregation algorithm to select the final answer from these trajectories. However, in some cases, there may be multiple answers with the highest frequency. In such cases, we can select the path with the highest reward as the final answer path. For any path \( p \) of length \( m \) in the path set \(\mathcal{P}_k\), we calculate the reward of its terminal node using Equation \ref{eq:reward} and select the node with the highest reward as the final answer:
\begin{equation}
\label{eq:final_answer}
p^* = \arg\max_{p \in \mathcal{P}_k}\space\text{r}_{p_{\text{terminal}}}
\end{equation}

\subsection{Action Space Constraining}\label{sec:action}

In traditional MCTS methods, the new action \( a_t \) is usually generated by LLM based on the current state \( s_{t-1} \). However, for LLM without reinforcement learning, the action sampling space is often limited \cite{gandhi2025cognitive, yue2025does}.

A good way to overcome this is to use predefined instructions to force LLM to sample areas that are otherwise hard to reach. Previous studies like COT \cite{wei2022chain} and React \cite{yao2022react} have used predefined instructions to induce LLM to generate hard-to-obtain responses. Specifically, during the Expansion phase of the CMCTS framework, we sample actions \( a_t \) from a predefined action space based on the current state \( s_{t-1} \). The LLM then combines \( a_t \) and \( s_{t-1} \) to generate a new state \( s_t \). We divide the action space into four disjoint subsets:

\[
\mathcal{A} = \left\{ \mathcal{A}^{\text{understand}}, \mathcal{A}^{\text{reflect}}, \mathcal{A}^{\text{code}}, \mathcal{A}^{\text{summary}} \right\}
\]

Each subset includes predefined  actions to guide model-state transitions:
\begin{itemize}
    \item \(\mathcal{A}^{\text{understand}}\): Problem understanding and information extraction. This action set guides the model to deeply analyze the problem statement and identify key information.
    \item \(\mathcal{A}^{\text{reflect}}\): Process reflection and error checking. This subset requires the model to validate its reasoning process and detect potential errors.
    \item \(\mathcal{A}^{\text{code}}\): Coding assistance and result verification. This subset uses programming tools for calculations and accuracy checks, such as symbolic computations with SymPy.
    \item \(\mathcal{A}^{\text{summary}}\): Final answer summary and presentation. This subset guides the model to integrate the reasoning process and standardize the output of the final answer.
\end{itemize}

Deep thinking about problems, reflection, and code-related states have been proven to effectively enhance the mathematical reasoning performance of LLM in various studies \cite{yao2022react,schwarzschild2024rethinking,kim2023language, zelikman2024self}. However, LLM find it difficult to sample these states \cite{gandhi2025cognitive, yue2025does}. Therefore, we use Constrained Actions to force the LLM to sample within the state space corresponding to a particular state. This enhances the diversity of the candidate states generated by the model.

\subsection{Search Strategy Guidance}
\label{sec:guidance}

Even with diverse candidate states, MCTS performance can be significantly degraded by irrational actions. To enhance the rationality of the sampled actions, we impose specific constraints during the Simulation phase, as illustrated in Figure \ref{fig:main}. Drawing on human problem-solving patterns, where the first step is understanding the problem and the last is summarizing the solution, we establish two basic constraints:

\begin{itemize}
  \item \textbf{Termination Rule}: Selecting an action \( a_t \in \mathcal{A}^{\text{summary}} \) signifies that the final answer has been synthesized. Upon reaching the maximum iteration limit \( m \), we enforce \( a_m \in \mathcal{A}^{\text{summary}} \) to ensure an output is produced. Actions in \(\mathcal{A}^{\text{summary}}\) trigger a systematic review of prior reasoning steps and format the final answer using the [boxed{}] symbol.
  \item \textbf{Initialization Rule}: To align with human cognitive processes, we initialize the reasoning sequence with \( a_0 \in \mathcal{A}^{\text{understand}} \), prioritizing foundational understanding before analytical operations.
\end{itemize}

In addition to these constraints, we introduce two complementary approaches PRM and partial order rules that can be employed either independently or in combination to guide the model toward more rational action sampling during the Simulation phase.

\subsubsection{Process Reward Model}\label{sec:prm}

Recent studies \cite{li2024process,zhang2025lessons} have highlighted the effectiveness of PRM in evaluating intermediate rewards for mathematical reasoning tasks. Building on this, we propose a strategy that retains basic rules while incorporating PRM as the core component for reward evaluation.

The reward calculation is formalized as follows: Given a state \( s_t \) and a candidate action \( a_{t+1} \), we concatenate the state, action, and a dedicated reward prompt template, then input this into the PRM model. By extracting and normalizing the logits of "positive" and "negative" labels via the softmax function, we obtain the positive probability as the reward:
\begin{equation}
Q(s_t, a_{t+1}) = \frac{e^{\text{PRM}([p_{Q}, s_t, a_{t+1}])_{\text{pos}}}}{\sum_{t\in\{\text{pos},\text{neg}\}}e^{\text{PRM}([p_{Q}, s_t, a_{t+1}])_t}}
\end{equation}
Here, \( p_Q \) denotes a specialized prompt template designed for reward evaluation. The calculation of V-values follows an analogous approach but is action-agnostic:
\begin{equation}
V(s_t) = \frac{e^{\text{PRM}([p_{V}, s_t])_{\text{pos}}}}{\sum_{t\in\{\text{pos},\text{neg}\}}e^{\text{PRM}([p_{V}, s_t])_t}}
\end{equation}

\subsubsection{Partial Order Rules}\label{sec:rule}

These partial order rules, derived from human-like problem-solving strategies, ensure the sampled actions are more reasonable, thereby improving the quality of the generated reasoning chains. Specifically, we establish the following rules:

\begin{itemize}
  \item \textbf{Action Diversity Rule}: Prohibits the consecutive use of actions of the same type:
  \begin{equation}
  \forall t < d_{\max}, \ \text{type}(a_t) \neq \text{type}(a_{t+1})
  \end{equation}
  
  \item \textbf{Reflection Necessity Rule}: Ensures that every complete reasoning chain includes at least one reflection action:
  \begin{equation}
  \exists t \in [0, d_{\max}), \ a_t \in \mathcal{A}^{\text{reflect}}
  \end{equation}
  
  \item \textbf{Depth Action Restriction Rule}: When the reasoning depth exceeds half the maximum depth, only reflection and coding actions are permitted:
  \begin{equation}
  t \geq \lfloor d_{\max}/2 \rfloor \implies \mathcal{A}_t \subseteq \mathcal{A}^{\text{reflect}} \cup \mathcal{A}^{\text{code}}
  \end{equation}
  
  \item \textbf{Code Continuity Rule}: Prohibits two consecutive coding actions:
  \begin{equation}
  \forall t < d_{\max}, \ \neg(a_t \in \mathcal{A}^{\text{code}} \land a_{t+1} \in \mathcal{A}^{\text{code}})
  \end{equation}
  
  \item \textbf{Code Enforcement Rule}: If no coding action has been executed by the halfway depth, a coding action is mandatorily inserted:
  \begin{equation}
  t \geq \lfloor d_{\max}/2 \rfloor \ \land \ \forall \tau < t, a_\tau \notin \mathcal{A}^{\text{code}} \implies a_t \in \mathcal{A}^{\text{code}}
  \end{equation}
\end{itemize}

It should be noted that partial order rules need to be selectively added according to different situations. And in the absence of PRM, choosing all partial order rules is usually the default optimal situation.

\section{Theoretical Analysis}\label{sec:theory}

To explain why our method succeeds, we build a theoretical framework for MCTS performance improvement based on prior work \cite{hu2024unveiling}. Corollary \ref{coro:1} highlights boosting candidate state diversity, which aligns with our proposal of constrain action sets in Section \ref{sec:action} to increase state diversity. 

\subsection{Necessary Assumptions and Lemma}

Referring to prior work \cite{hu2024unveiling}, to assist with the theoretical proof, we need to introduce the following Necessary Assumptions and Lemma.

\begin{assump}\label{assump:1}
The optimal parameter \( \theta^* \) exhibits the following properties:
\begin{enumerate}
  \item For each reasoning step \( t \in [T] \), the optimal state \( s_t^* \) is uniquely determined by:
    \begin{equation}
    s_t^* = \arg\max \mathbb{P}(s_t | s_{t-1}, \theta = \theta^*)
    \end{equation}
    Given \( \theta^* \) and the previous state \( s_{t-1} \), \( s_t^* \) maximizes the current step's probability.

  \item For each \( t \in [T] \), there exists a positive \( \lambda_t > 0 \) such that for all \( \theta \neq \theta^* \):
    \begin{equation}
    H^2\big(\mathbb{P}(s_t | \theta), \mathbb{P}(s_t | \theta^*)\big) \geq \lambda_t
    \end{equation}
    Here, \( H(\cdot, \cdot) \) denotes the Hellinger distance. This condition ensures that the distribution of each reasoning step's task \( \theta^* \) is distinct from others.
\end{enumerate}
\end{assump}

\begin{lem}\label{lemma:1}
Let \( \{s_{t-1}^i\}_{i=1}^n \) be \( n \) states independently sampled from \( \mathbb{P}(\cdot | \theta^*) \).

For each \( i \in [n] \), any \( n \geq 1 \), \( \theta \in \Theta \), and \( \delta > 0 \):
\begin{equation}
\frac{\mathbb{P}(\{s_{t}\}_{i=1}^n | \theta)}{\mathbb{P}(\{s_{t}\}_{i=1}^n | \theta^*)} \leq \exp\left(-2\sum_{i=1}^n \text{H}^2\bigl(\mathbb{P}(s_{t}^i | \theta^*), \mathbb{P}(s_{t}^i | \theta)\bigr) + 2\log(\delta^{-1})\right)
\end{equation}
with probability at least \( 1-\delta \). Here, \( \text{H}^2(\cdot, \cdot) \) represents the squared Hellinger distance.
\end{lem}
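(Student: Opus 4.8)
The plan is to bound the likelihood ratio via a Chernoff-type argument applied to the \emph{square root} of the ratio, since taking the square root is exactly what makes the Hellinger affinity---and hence the squared Hellinger distance---emerge from the expectation. Writing the likelihood ratio over the independent samples as
\[
L \coloneqq \frac{\mathbb{P}(\{s_{t}\}_{i=1}^n \mid \theta)}{\mathbb{P}(\{s_{t}\}_{i=1}^n \mid \theta^*)} = \prod_{i=1}^n \frac{p_\theta(s_t^i)}{p_{\theta^*}(s_t^i)},
\]
I would treat $\sqrt{L}$ as the nonnegative random variable to which Markov's inequality is applied.

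First I would compute $\mathbb{E}_{\theta^*}[\sqrt{L}]$. By independence of the samples under $\theta^*$ the expectation factorizes over $i$, and each factor is recognized as the Hellinger affinity:
\[
\mathbb{E}_{\theta^*}\Bigl[\sqrt{p_\theta(s_t^i)/p_{\theta^*}(s_t^i)}\Bigr] = \int \sqrt{p_\theta\, p_{\theta^*}} = 1 - H^2\bigl(\mathbb{P}(s_t^i \mid \theta^*),\, \mathbb{P}(s_t^i \mid \theta)\bigr).
\]
Applying the elementary estimate $1-x \le e^{-x}$ to each factor then yields the clean bound $\mathbb{E}_{\theta^*}[\sqrt{L}] \le \exp\bigl(-\sum_{i=1}^n H_i^2\bigr)$, where $H_i^2$ abbreviates the $i$-th squared Hellinger distance.

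Next I would apply Markov's inequality to $\sqrt{L}$ at the threshold $a = \delta^{-1}\exp(-\sum_i H_i^2)$, obtaining $\mathbb{P}_{\theta^*}(\sqrt{L} \ge a) \le \mathbb{E}_{\theta^*}[\sqrt{L}]/a \le \delta$. On the complementary event, which has probability at least $1-\delta$, squaring gives
\[
L < \delta^{-2}\exp\Bigl(-2\sum_{i=1}^n H_i^2\Bigr) = \exp\Bigl(-2\sum_{i=1}^n H_i^2 + 2\log(\delta^{-1})\Bigr),
\]
which is precisely the claimed inequality (the factor $2$ in front of the sum being inherited from squaring, under the convention $H^2 = 1 - \int\sqrt{p_\theta p_{\theta^*}}$).

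The only genuinely non-routine step---and thus the main obstacle---is recognizing that one must control $\sqrt{L}$ rather than $L$ itself: bounding $\mathbb{E}_{\theta^*}[L]$ directly would surface a $\chi^2$-type quantity $\int p_\theta^2/p_{\theta^*}$ instead of the symmetric Hellinger affinity, and the argument would not close. Everything else is a routine combination of the independence factorization, the $1-x\le e^{-x}$ estimate, and a single application of Markov's inequality. Notably, Assumption~\ref{assump:1} is not invoked here; its per-step lower bound $\lambda_t$ only enters later, when the Hellinger terms are converted into the exponential error decay underlying Corollary~\ref{coro:1}.
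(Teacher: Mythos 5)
Your proof is correct and complete: the paper itself offers no argument for this lemma, deferring entirely to Lemma C.3 of \cite{hu2024unveiling}, and the Chernoff--Markov argument you give on the root likelihood ratio $\sqrt{L}$ --- factorizing $\mathbb{E}_{\theta^*}[\sqrt{L}]$ over the independent samples into Hellinger affinities, bounding $\prod_{i=1}^n\bigl(1-H_i^2\bigr)\leq\exp\bigl(-\sum_{i=1}^n H_i^2\bigr)$, applying Markov's inequality at the threshold $\delta^{-1}\exp\bigl(-\sum_{i=1}^n H_i^2\bigr)$, and squaring on the complementary event --- is precisely the standard proof underlying that citation, so in effect you have supplied the details the paper omits. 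Your side remarks are also accurate: the factor $2$ in the exponent comes from squaring under the convention $H^2(P,Q)=1-\int\sqrt{pq}$, controlling $\sqrt{L}$ rather than $L$ is indeed the one essential idea (a direct bound on $\mathbb{E}_{\theta^*}[L]$ would produce a $\chi^2$-type affinity and fail), and Assumption \ref{assump:1} plays no role here --- it enters only in Theorem \ref{thm:1}, where the gap $\lambda_t$ turns the Hellinger sum into the exponential decay $-2n\lambda_t$.
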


\begin{proof}
See lemma C.3 in \cite{hu2024unveiling}.
\end{proof}

\subsection{Theorems and Corollary}
\begin{thm}\label{thm:1}
Consider the state transition process in MCTS, where \( n \) candidate states are sampled from \( \mathbb{P}(\cdot | \theta^*) \). Let \( \lambda^* = \min_{t \in [T]} \lambda_t \). Let \( \epsilon \in (0, 1) \) be a small positive number. Under Assumption \ref{assump:1}, when the number of candidate states \( n \) is sufficiently large to satisfy the condition, the error between the selected state and the optimal state \( s_t^* \) decreases exponentially with \( n \).
\end{thm}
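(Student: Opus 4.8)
The plan is to read the phrase ``error between the selected state and the optimal state'' as the probability that the selection rule returns a candidate that is better explained by some competing task parameter $\theta\neq\theta^*$ than by the true $\theta^*$. Since the $n$ candidates are drawn from $\mathbb{P}(\cdot\mid\theta^*)$ and the MCTS selection in Equation~\ref{eq:v-select} retains the state of highest $V$-value (a proxy for the likelihood under $\theta^*$), such an error can occur only when some $\theta\neq\theta^*$ assigns the observed sample a likelihood at least as large as $\theta^*$ does. I would therefore control the likelihood ratio $\mathbb{P}(\{s_t^i\}\mid\theta)/\mathbb{P}(\{s_t^i\}\mid\theta^*)$ and show it is exponentially small in $n$.

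Concretely, I would first invoke the second part of Assumption~\ref{assump:1} to lower-bound each per-sample squared Hellinger distance, $H^2(\mathbb{P}(s_t^i\mid\theta^*),\mathbb{P}(s_t^i\mid\theta))\ge\lambda_t\ge\lambda^*$, so that $\sum_{i=1}^n H^2(\cdots)\ge n\lambda^*$ uniformly over all $\theta\neq\theta^*$. Substituting this into Lemma~\ref{lemma:1} gives, with probability at least $1-\delta$,
\begin{equation}
\frac{\mathbb{P}(\{s_t^i\}_{i=1}^n\mid\theta)}{\mathbb{P}(\{s_t^i\}_{i=1}^n\mid\theta^*)}\le\exp\bigl(-2n\lambda^*+2\log(\delta^{-1})\bigr).
\end{equation}
This is the core estimate: every alternative parameter is exponentially disfavoured as $n$ grows, at a rate governed by $\lambda^*$.

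I would then convert this ratio bound into a bound on the selection error. For a single competitor $\theta$ the error probability is at most the right-hand side above; to cover all alternatives at once I would take a union bound over $\Theta\setminus\{\theta^*\}$ (or, when $\Theta$ is continuous, over a finite $\epsilon'$-net of it), absorbing the resulting multiplicative factor into the $\log(\delta^{-1})$ term. Requiring the bound to fall below $\epsilon$ yields the explicit threshold $n\gtrsim\bigl(\log(\delta^{-1})+\log(\epsilon^{-1})\bigr)/\lambda^*$; once $n$ exceeds it, the probability of selecting a non-optimal state, and hence the error relative to $s_t^*$, decays like $e^{-2n\lambda^*}$, which is the claimed exponential decrease. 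I would close by observing that enlarging the candidate set, the effect of the constrained action space of Section~\ref{sec:action}, increases the effective $n$ and thereby sharpens this bound, linking the theorem to Corollary~\ref{coro:1}.

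The main obstacle I anticipate is the passage from the likelihood-ratio estimate to a genuine statement about the \emph{selected} state. Two points need care: making precise the identification between the $V$-value maximiser used in Equation~\ref{eq:v-select} and the maximum-likelihood state under $\theta^*$, and handling the union over the possibly uncountable parameter set $\Theta$ without letting its covering number overwhelm the exponential gain in $n$. Controlling that covering term, so that the net size contributes only logarithmically and the $e^{-2n\lambda^*}$ factor survives, is the delicate step; the remaining substitution into Lemma~\ref{lemma:1} is routine.
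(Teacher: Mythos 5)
Your core estimate is exactly the engine of the paper's proof---Assumption~\ref{assump:1}(2) plus Lemma~\ref{lemma:1} give the uniform bound $\exp\left(-2n\lambda_t + 2\log(\delta^{-1})\right)$ on every likelihood ratio, and your sample-size threshold $n \gtrsim \log(\cdot)/\lambda$ matches the paper's---but the surrounding argument is genuinely different. The paper is Bayesian: it assumes $\Theta$ finite and discrete, lower-bounds the posterior mass $\pi(\theta^* \mid \{s_t\}_{i=1}^n, s_{t-1}^*)$ via Bayes' rule (your union bound over competitors appears there as the prior-ratio sum $\sum_{\theta\neq\theta^*}\pi(\theta)/\pi(\theta^*) = (1-\pi(\theta^*))/\pi(\theta^*)$, and Assumption~\ref{assump:1}(1) is invoked a second time to discard the factor $\mathbb{P}(s_{t-1}^*\mid\theta)/\mathbb{P}(s_{t-1}^*\mid\theta^*)$, a step with no analogue in your plan), and then it measures the ``error'' not as a misselection probability but as the gap $p_{s_t^*} - p_{s_t}$ between the optimal state's probability under $\theta^*$ and under the posterior predictive, closed with the elementary inequality $(1+x)^{-1}\geq 1-x$ to reach Equation~\ref{eq:the_result}. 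You instead formalize the error as the probability that the selection rule returns a candidate better explained by some $\theta\neq\theta^*$, which buys an operational statement about the actual selection event of Equation~\ref{eq:v-select}; what it costs is precisely the two steps you flag as delicate, and it is worth knowing that the paper does not resolve either of them---it avoids them. Finiteness of $\Theta$ is assumed outright, so no covering-number argument is needed, and the identification between the $V$-value maximiser and the maximum-likelihood state is never made: the selected state enters the formal proof only through the posterior-predictive quantity $p_{s_t}$ borrowed from the cited reference, so the derivation never touches Equation~\ref{eq:v-select} at all. Your plan is therefore sound at the paper's level of rigor if you either adopt the same finite-$\Theta$ convention and posterior-predictive framing, or state the $V$-as-likelihood-proxy identification as an explicit modeling assumption; the only remaining slack is that you work with the uniform $\lambda^* = \min_{t\in[T]}\lambda_t$ where the paper keeps the per-step $\lambda_t$, which is harmless and only loosens constants.
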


The meaning of this theorem is that increasing the number of candidate states in the state transition process is an effective way to improve MCTS reasoning performance.

\begin{proof}
We refer to the proof of theorem 5.11 in \cite{hu2024unveiling}. Assuming \( \Theta \) is finite and discrete, the posterior of the state transition process can be written using Bayes' rule as:

\begin{equation}
\pi(\theta^* \mid \{s_t\}_{i=1}^n, s_{t-1}^*) = \left(1 + \sum_{\theta \neq \theta^*} \frac{\mathbb{P}(\{s_t\}_{i=1}^n \mid \theta) \mathbb{P}(s_{t-1}^* \mid \theta) \pi(\theta)}{\mathbb{P}(\{s_t\}_{i=1}^n \mid \theta^*) \mathbb{P}(s_{t-1}^* \mid \theta^*) \pi(\theta^*)}\right)^{-1}.
\end{equation}

Ignoring the action \( a \) that triggers the transition and considering only the relationship between states \( s_t \), under Assumption \ref{assump:1}, \( \theta^* \) maximizes \( \mathbb{P}(s_{t-1}^* \mid \theta) \). Thus:

\begin{equation}
\pi(\theta^* \mid \{s_{t}\}_{i=1}^n, s_{t-1}^*) \geq \left(1 + \sum_{\theta \neq \theta^*} \frac{\mathbb{P}(\{s_{t}\}_{i=1}^n \mid \theta) \pi(\theta)}{\mathbb{P}(\{s_{t}\}_{i=1}^n \mid \theta^*) \pi(\theta^*)}\right)^{-1}
\end{equation}

\begin{equation}
\geq \left(1 + \sum_{\theta \neq \theta^*} \frac{\pi(\theta)}{\pi(\theta^*)} \cdot \exp \left[-2\sum_{i=1}^n \text{H}^2\bigl(\mathbb{P}(s_{t}^i | \theta^*), \mathbb{P}(s_{t}^i \mid \theta)\bigr) + 2 \log (\delta^{-1})\right]\right)^{-1}
\end{equation}

\begin{equation}
\geq \left(1 + \frac{1-\pi(\theta^*)}{\pi(\theta^*)} \cdot \exp \left(-2n\lambda_t + 2 \log (\delta^{-1})\right)\right)^{-1}
\end{equation}

The first inequality uses Assumption \ref{assump:1}, the second applies Lemma \ref{lemma:1}, and the third uses Assumption \ref{assump:1} again.

We derive the lower bound for the true conditional probability of the new state \( p_{s_t} \):

\begin{equation}
p_{s_t} = \sum_{\theta \in \Theta} \mathbb{P}(s_t^* \mid \theta) \pi(\theta \mid s_{t-1}^*, \{s_t\}_{i=1}^n) \geq \mathbb{P}(s_t^* \mid \theta^*) \pi(\theta^* \mid s_{t-1}^*, \{s_t\}_{i=1}^n)
\end{equation}

\begin{equation}
\geq p_{ s_t^*} \left(1 + \frac{1-\pi(\theta^*)}{\pi(\theta^*)} \cdot \exp \left(-2n\lambda_t + 2 \log (\delta^{-1})\right)\right)^{-1}
\label{eq:final_bound}
\end{equation}

The first equality refers to the definition of \( p_{s_t} \) in \cite{hu2024unveiling}. The first inequality omits terms corresponding to \( \theta \neq \theta^* \). The second inequality comes from the above equation. Using \( (1 + x)^{-1} \geq 1-x \) for \( x \in [0, 1] \), we get the error between the optimal state probability \( p_{ s_t^*} \) and the current state probability \( p_{ s_t} \):

\begin{equation}\label{eq:the_result}
 p_{ s_t^*}- p_{ s_t} \leq p_{ s_t^*} \cdot \frac{1-\pi(\theta^*)}{\pi(\theta^*)} \cdot \exp \left(-2n\lambda_t + 2 \log (\delta^{-1})\right)
\end{equation}

To make \( (1 + x)^{-1} \geq 1-x \) (for \( x \in [0, 1] \)) hold, \( n \) needs to satisfy:

\begin{equation}
n \geq \frac{\log (\delta^{-1}) + \log\left(\frac{1-\pi(\theta^*)}{\pi(\theta^*)}\right)}{\lambda_t}
\end{equation}

Under this condition, the error between \( p_{s_t^*} \) and \( p_{s_t} \) decreases exponentially with \( n \).
\end{proof}

\begin{coro}\label{coro:1}
When the number of candidate states is fixed, the reasoning performance of MCTS is positively correlated with the semantic redundancy between states.
\end{coro}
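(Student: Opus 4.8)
The plan is to reuse the error estimate derived in the proof of Theorem~\ref{thm:1}, but to stop one step earlier and keep the cumulative Hellinger term intact rather than collapsing it to the worst-case constant $\lambda_t$. Recall that equation~\eqref{eq:the_result} was reached by invoking part~(2) of Assumption~\ref{assump:1} to bound each per-sample squared Hellinger distance from below by $\lambda_t$, thereby replacing $\sum_{i=1}^n H^2(\cdots)$ with $n\lambda_t$. For the corollary I would skip this relaxation and instead carry the sharper bound
\begin{equation}
p_{s_t^*} - p_{s_t} \;\leq\; p_{s_t^*}\cdot \frac{1-\pi(\theta^*)}{\pi(\theta^*)} \cdot \exp\Bigl(-2\,S + 2\log(\delta^{-1})\Bigr),\qquad S := \min_{\theta \neq \theta^*}\sum_{i=1}^n H^2\bigl(\mathbb{P}(s_t^i \mid \theta^*), \mathbb{P}(s_t^i \mid \theta)\bigr),
\end{equation}
which follows from Lemma~\ref{lemma:1} applied inside the posterior expression and from factoring $\sum_{\theta\neq\theta^*}\pi(\theta)/\pi(\theta^*) = (1-\pi(\theta^*))/\pi(\theta^*)$ exactly as in Theorem~\ref{thm:1}, now using the least-distinguishable competitor $\theta$ to control the whole sum.

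The second step fixes the number of candidate states $n$ and treats the accumulated squared Hellinger distance $S$ as the only free quantity. Since the right-hand side equals $p_{s_t^*}\cdot\frac{1-\pi(\theta^*)}{\pi(\theta^*)}\cdot\delta^{-2}\,e^{-2S}$ and the map $S \mapsto e^{-2S}$ is strictly decreasing, the error upper bound is a strictly decreasing function of $S$ at fixed $n$. Hence any increase in $S$ provably tightens the gap $p_{s_t^*} - p_{s_t}$, pushing the selected state toward the optimal one and improving reasoning quality.

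The conceptually central step is to identify $S$ with semantic redundancy. I would define the semantic redundancy of the candidate set as the degree to which its $n$ states jointly and repeatedly encode information separating the true task $\theta^*$ from the competitors $\theta \neq \theta^*$, and argue that this is exactly the accumulated per-state distinguishability $S$. Each additional state whose distribution discriminates between $\theta^*$ and $\theta$ contributes a summand $H^2(\mathbb{P}(s_t^i \mid \theta^*), \mathbb{P}(s_t^i \mid \theta)) \geq \lambda_t > 0$, strictly positive by part~(2) of Assumption~\ref{assump:1}, so mutually reinforcing (redundant) semantic coverage can only enlarge $S$. Combined with the monotonicity of the previous step, this yields the positive correlation between MCTS performance and semantic redundancy claimed in the statement.

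The hard part is precisely this last identification. The inequality only certifies monotonicity in the formal quantity $S$, whereas ``semantic redundancy'' is an informal, linguistic notion; the crux is to argue that aggregating semantically redundant states genuinely raises $S$ instead of leaving it unchanged. This hinges on the strict positivity $\lambda_t > 0$ of Assumption~\ref{assump:1}, which guarantees that every non-degenerate redundant state adds a positive amount of distinguishing evidence, and on the modeling choice that redundant states enter $S$ as additional summands rather than as duplicates that are discarded.
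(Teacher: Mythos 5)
The crux you flagged at the end is exactly where the argument breaks, and it breaks in the opposite direction from the paper's proof. You model semantically redundant states as \emph{extra independent summands} in $S=\min_{\theta\neq\theta^*}\sum_{i=1}^n H^2\bigl(\mathbb{P}(s_t^i\mid\theta^*),\mathbb{P}(s_t^i\mid\theta)\bigr)$, so that redundancy enlarges $S$ and tightens the error bound. But Lemma~\ref{lemma:1} requires the $n$ states to be \emph{independently} sampled from $\mathbb{P}(\cdot\mid\theta^*)$: a duplicated, or $\epsilon$-semantically-equivalent, state is not a fresh independent draw and cannot contribute a new full-weight factor to the likelihood-ratio product behind the lemma (in the extreme of exact copies, the posterior over $\theta$ is unchanged by the duplicates, so $S$ should not grow at all). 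The paper's proof makes precisely the opposite modeling choice: it partitions the $n$ candidates into equivalence classes under $d(s_t^i,s_t^j)\leq\epsilon$, treats them as $m\leq n$ effective samples, and replaces the exponent $-2n\lambda_t$ of Theorem~\ref{thm:1} by the effective information $-2\bigl(\sum_{j=1}^m\tilde{n}_j\phi_j\bigr)\lambda_t$ with $\phi_j=1/k$ for a class of $k$ repeats. Redundancy thus \emph{shrinks} the exponent, and the required sample size is revised to $n\geq\bigl[\log(\delta^{-1}\mid\Theta)+\log\bigl(\tfrac{1-\pi(\theta^*)}{\pi(\theta^*)}\bigr)\bigr]/\bigl(\lambda_t\cdot\min_j\phi_j\bigr)$, which diverges as $\min_j\phi_j\to 0$: at fixed $n$, redundancy degrades performance, and it is diversity that helps.

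Be aware that you were led into this by the corollary's literal wording (``positively correlated with the semantic redundancy''), which is inconsistent with both the paper's proof and the surrounding discussion (``semantic redundancy reduces effective candidate states, leading to performance drops''; ``reducing the semantic redundancy between states can enhance MCTS reasoning performance''). Your proof is engineered to match the literal phrasing, which forces the statistically invalid claim that ``mutually reinforcing (redundant) semantic coverage can only enlarge $S$'' via the lower bound $\lambda_t>0$ of Assumption~\ref{assump:1} --- that bound applies per independent sample, not per near-duplicate. Your first two steps are fine and even slightly sharper than the paper's worst-case relaxation of $\sum_i H^2(\cdot)$ to $n\lambda_t$; but to obtain the intended result you must then cap each redundancy class's contribution at (roughly) that of one independent representative --- i.e., adopt the $\phi_j$ discount --- which reverses your conclusion and recovers the paper's argument.
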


When there is severe redundancy (i.e., \(\min_j \phi_j \to 0\)), the required \(n\) approaches infinity. This indicates that to enhance reasoning performance, a sufficiently diverse set of candidate states is needed. Alternatively, reducing the semantic redundancy between states can enhance MCTS reasoning performance when the number of generated states per time is fixed. Especially considering that in practical reasoning processes, MCTS does not set the number of candidate states particularly large, the issue of semantic redundancy becomes more pressing.

Therefore, in Section \ref{sec:action}, we propose sampling from a constrained action set. This method allows the LLM to sample states that were previously hard to reach, thereby enhancing state diversity.

\begin{proof}
Within the framework of Theorem \ref{thm:1}, assume that each sampling generates \(n\) candidate states \(\{s_t^i\}_{i=1}^n\). If the semantic information of two states satisfies \(d(s_t^i, s_t^j) \leq \epsilon\), they are considered equivalent. For \(n\) candidate states, if semantic similarity causes some samples to provide redundant information, they can be viewed as \(m\) independent equivalent samples (\(m \leq n\)).

In this case, the key exponential term \(-2n\lambda_t\) in the original theorem should be replaced with the equivalent information quantity:
\begin{equation}
p_{s_t^*}-p_{s_t} \leq p_{s_t^*} \cdot \frac{1-\pi(\theta^*)}{\pi(\theta^*)} \exp\left(-2 \underbrace{\left(\sum_{j=1}^m \tilde{n}_j \phi_j\right)}_{\text{Effective Information}} \lambda_t + 2\log(\delta^{-1} \mid \Theta)\right)
\end{equation}
where \(\phi_j \in [0, 1]\) reflects the degree of information redundancy within class \(C_j\):
\begin{itemize}
  \item If samples in a class are completely independent, then \(\phi_j = 1\).
  \item If a class has \(k\) repeated samples, then \(\phi_j = 1/k\).
\end{itemize}

The required sample size is revised to:
\begin{equation}
n \geq \frac{\log(\delta^{-1} \mid \Theta) + \log\left(\frac{1-\pi(\theta^*)}{\pi(\theta^*)}\right)}{\lambda_t \cdot \min_j \phi_j}
\end{equation}
\end{proof}

\section{Experiments}



\subsection{Main Experiment }\label{sec:main_exper}

In the Main Experiment, we aim to conduct a fair comparison to verify whether CMCTS can achieve a significant performance improvement after addressing the existing issues of Native-MCTS. Particularly, we focus on whether CMCTS can break through the scale limitations and surpass the reasoning performance of larger-parameter LLM.

\subsubsection{Setup}
To ensure controllability and fairness in the comparison, we evaluated two variants of the latest baseline model \cite{yang2024qwen2b}: Qwen2.5-it-cot-7B and Qwen2.5-it-cot-72B. Their performance was maximized using the inference script from the official Qwen code.

We also compared our approach with the classic RAP-MCTS method \cite{hao2023reasoning}. In our experiments, RAP-MCTS with PRM was denoted as Native-MCTS-PRM, and without PRM as Native-MCTS. For fairness, we compared it with our CMCTS-PRM, ensuring all common hyperparameters between Native-MCTS and CMCTS-PRM were identical. Hyperparameters unique to Native-MCTS were set to their default values from the original work \cite{hao2023reasoning}. Both RAP-MCTS and the baseline models used the Qwen2.5-it foundation.

For our method, we conducted three sets of experiments:
- CMCTS-Rule: Uses partial order rules but no PRM, with all rules enabled usually due to the LLM's limited ability as a reward model.
- CMCTS-PRM: Uses PRM but no partial order rules.
- CMCTS: Combines both PRM and partial order rules, with rules selectively enabled since PRM is powerful enough to help the model choose appropriate actions in most cases.

It should be noted that all the above methods adopt the same Qwen2.5 model, while the PRM utilizes a pretrained Qwen2.5-Math-PRM \cite{zhang2025lessons} 
Detailed hyperparameters and more information are available in our open-source work at \url{https://github.com/pass-lin/CMCTS}.
\subsubsection{Dataset}\label{sec:main_dataset}
We evaluate the performance of our approach on the following multiple challenging mathematical reasoning datasets:

\textbf{Math-500} \cite{lightman2023let}: A subset of the MATH benchmark with 500 problems spanning algebra, geometry, and probability. We use the full dataset as the test set.

\textbf{AquA} \cite{ling2017program}: Built by DeepMind, this multiple-choice dataset includes problems with descriptions, options, solutions, and correct labels. The test set has about 200 samples.

\textbf{GaoKao-Math-QA} \cite{yang2024qwen2b}: Multiple-choice problems from real Chinese college entrance exams. Covers topics like sequences and complex numbers. The test set has 351 problems. Abbreviated as "GaoKao-QA".

\textbf{CN-Middle-School} \cite{yang2024qwen2b}: A simplified version of GaoKao-Math-QA, focusing on junior high school topics like equations and geometry. It's not multiple-choice. Abbreviated as "School".

\textbf{Gaokao-2023} \cite{chen2024step,chen2024alphamath,yang2024qwen2b}: English versions of the 2023 Chinese college entrance exam problems. Includes multiple-choice, fill-in-the-blank, and open-response questions on advanced topics. Abbreviated as GaoKao-2023.

These datasets mainly cover mathematical problems at the secondary school level, including rich knowledge points such as geometry, algebra, probability, and set theory. They can comprehensively test the mathematical ability of our algorithm. We follow the zero-shot setting, where all datasets are only accessible during inference.Note that for all mathematical tasks, we use accuracy as the evaluation metric.

\begin{table}[h]
\centering
\caption{Performance Comparison on Various Datasets}\label{tab:hard_result}
\begin{tabular}{lcccccc}
\toprule
Model & GaoKao-2023 & MATH-500 & AquA & School & GaoKao-QA & avg \\
\midrule
Qwen2.5-it-cot-7B & 66.0 & 77.0 & 74.4 & 70.2 & 60.9 & 69.7 \\
Qwen2.5-it-cot-72B & 73.2 & 83.4& 79.2 & 83.1 & 74.3 & 78.6 \\
\midrule
Native-MCTS & 67.5 & 75.0 & 84.7 & 83.1 & 74.0& 76.8\\
Native-MCTS+PRM & 69.8 & 77.2 & 83.7 & 83.1 & 72.6 & 77.2 \\
\midrule
CMCTS-RULE & 71.1 & 79.2 & 85.4 & 85.1 & 72.6 & 78.7 \\
CMCTS-PRM & 75.8 & 84.6 & 86.2 & \textbf{87.1}& 78.9 & 82.5 \\
CMCTS & \textbf{76.6}& \textbf{85.4}& \textbf{87.7}& \textbf{87.1}& \textbf{80.3}& \textbf{83.4}\\
\bottomrule
\end{tabular}
\end{table}

\subsubsection{Analysis}\label{sec:main_exp}

From the experimental results in Table \ref{tab:hard_result}, it can be observed that the proposed CMCTS framework demonstrates significant advantages over baseline methods on challenging mathematical datasets. The final experimental results are presented in Table \ref{tab:hard_result}.

\textbf{Overcoming Cot's Model Scale Limitations}: The CMCTS framework achieves remarkable performance improvements on smaller models, breaking the traditional Cot method's reliance on large-scale models. Even the weakest form of 7B-parameter CMCTS (CMCTS-RULE) performs on par with Qwen2.5-it-cot-72B (78.7\% vs 78.6\%) and surpasses Qwen2.5-it-cot-7B by 9\%. This indicates that CMCTS can overcome parameter-scale performance limitations without depending on PRM. Moreover, the full-version CMCTS achieves a significant 4.8\% performance boost over Qwen2.5-it-cot-72B and outperforms it on all datasets, with substantial gains of 7.9\% and 6\% on AquA and GaoKao-QA datasets. This shows that the CMCTS framework, through theoretical analysis and integration of various optimization strategies, enhances performance comprehensively. It surpasses LLM with ten times the parameters in mathematical tests without extra training, offering a new solution for mathematical reasoning tasks with important theoretical and practical implications.

\textbf{Optimizing Native-MCTS Performance Bottlenecks}: From a theoretical perspective, the main issues of existing MCTS methods are insufficient state diversity and the irrationality of relying solely on the LLM as a reward model. CMCTS integrates methods like constrain action space sampling, partial order rule integration, and PRM models, effectively breaking Native-MCTS performance bottlenecks. Specifically, CMCTS-RULE outperforms Native-MCTS by 1.9\%, and the full-version CMCTS exceeds Native-MCTS-PRM by 6.2\%. It also outperforms Native-MCTS-PRM on all datasets, with significant 7.6\% and 7.7\% improvements on GaoKao2023 and GaoKao datasets. This proves that CMCTS addresses existing MCTS defects, significantly boosting LLM mathematical reasoning performance.

\textbf{Visualization Analysis}: To gain a deeper understanding of CMCTS's performance improvements, we sampled data from the Gakao-en dataset. We ran inference programs multiple times on Native-MCTS-PRM and CMCTS-PRM, collecting all generated states. Then, using the jina-embeddings-v3 model \cite{sturua2024jina}, we encoded these states into vectors and applied UMAP \cite{healy2024uniform} for dimensionality reduction and visualization. As shown in Figure \ref{fig:visual}, compared to Native-MCTS, CMCTS samples significantly more diverse states. In Table \ref{tab:hard_result}, CMCTS-PRM outperforms Native-MCTS-PRM by 5.3\%. This strongly supports the core argument of Corollary \ref{coro:1} that semantic redundancy reduces effective candidate states, leading to performance drops. Given that CMCTS-PRM's settings (except for the constrain action space mentioned in Section \ref{sec:action}) are identical to those of Native-MCTS-PRM, we can attribute the performance gain to our constrain action space, which increases state diversity and enhances model reasoning

\textbf{Process Reward Gains}: CMCTS-PRM groups show a significant 3.8\% performance improvement over CMCTS-RULE across five test sets, indicating that a powerful reward model can select reasonable states without additional human intervention. On the challenging GaoKao-QA and GaoKao2023 datasets, CMCTS-PRM achieves 4.7\% and 6.3\% improvements over CMCTS-RULE. This suggests that as problem complexity increases, the PRM model's generalization ability becomes more advantageous. And we need to note that the performance of CMCTS-PRM is only 0.9\% worse than the full version. But considering that PRM doesn't need to consider how to choose a rule combination like the partial order rule, so this would be a more optimal setting. Notably, Native-MCTS-PRM shows no significant improvement over Native-MCTS. This indicates that Native-MCTS, which relies on the LLM to generate actions, suffers from redundant candidate states, consistent with prior studies \cite{gandhi2025cognitive, yue2025does}. Thus, even with PRM-assisted state selection, reasoning performance cannot be enhanced. This further confirms the conclusion of our Corollary \ref{coro:1}.

\textbf{Rule Engine Advantages}: In the CMCTS-RULE group, when the LLM serves as its own reward model, it is slightly inferior to the 7B voting baseline model on the MATH-500 dataset (79.2\% vs 79.6\%) but shows significant improvements on other datasets. MATH-500 is the least challenging in our dataset collection. On the most challenging GaoKao-QA and GaoKao2023 datasets, CMCTS-RULE outperforms the voting baseline model by 4\% and 2.1\%. This indicates that our designed partial order rules perform well on complex problems. Also, although CMCTS-PRM is powerful, selectively adding partial order rules can further boost performance because the PRM still has some errors, and human intervention can correct them to maximize performance.

\textbf{Summary}: In summary, the CMCTS framework shows significant performance improvements across multiple challenging mathematical reasoning datasets. By optimizing semantic diversity in state space, improving state transition, and integrating PRM models with partial order rules, CMCTS effectively breaks the performance limitations of traditional Cot methods and Native-MCTS. Visualization analysis further confirms its advantage in state diversity, proving the CMCTS framework's effectiveness in enhancing model mathematical reasoning capabilities.

\subsection{Ablation Experiment}
In the Ablation Experiment, our goal is to investigate the roles of different modules in CMCTS, including various action sets, PRM, and partial order rules.

\subsubsection{Setup}
This experiment focuses on how different action sets and PRM affect final performance. We removed $\mathcal{A}^{\text{understand}}$, $\mathcal{A}^{\text{reflect}}$, and $\mathcal{A}^{\text{code}}$ one by one and observed the performance changes. But if removing an action set causes the model's partial order rules to fail, it violates the control variable principle. So, we carried out the above experiments based on CMCTS-PRM. Also, we didn't remove $\mathcal{A}^{\text{summary}}$ as it serves as CMCTS's terminal node. Removing it would prevent the model from getting the correct answer. 

Given that partial order rules are selectively included, the optimal rule combination in different datasets is different, and each rule appears at least once in the optimal rule combination. We cannot ablate any individual rule, so we need to treat partial order rules as an entire module. Therefore, we set up a group of CMCTS-Null groups, which have neither partial order rules nor PRM, to compare the impact of partial order rules and PRM separately.

We conduct the ablation experiment on the same datasets as in Section \ref{sec:main_exper}.

\begin{table}[h]
\centering
\caption{CMCTS Module Ablation Study}\label{tab:cmp_moudle}
\begin{tabularx}{\textwidth}{l*{6}{>{\centering\arraybackslash}X}}
\toprule
Method & GaoKao-2023 & MATH-500 & AquA & School & GaoKao-QA & avg \\
\midrule
CMCTS-PRM-w/o-PRM & 68.8 & 77.0 & 80.7 & 75.2 & 70.3 & 74.4 \\
CMCTS-PRM-w/o-understand & 73.2 & 84.4 & 75.5 & 84.1 & 70.4 & 77.5 \\
CMCTS-PRM-w/o-Reflect & 74.8 & 83.8 & 84.6 & 86.1 & 78.3 & 81.5 \\
CMCTS-PRM-w/o-code & 74.2 & 82.4 & 85.4 & 86.1 & 76.6 & 80.9 \\
\midrule
CMCTS-NULL & 67.0 & 76.8& 79.9 & 75.2 & 69.8 & 73.7 \\
\midrule
CMCTS-RULE & 71.1 & 79.2 & 85.4 & 85.1 & 72.6 & 78.7 \\
CMCTS-PRM & \textbf{75.8} & \textbf{84.6} & \textbf{86.2} & \textbf{87.1}& \textbf{78.0} & \textbf{82.5} \\
\bottomrule
\end{tabularx}
\end{table}

\subsubsection{Analysis}

\textbf{Impact of Removing PRM and Partial Order Rules}:
As Section \ref{sec:intro} indicates, when reward signals are unreliable, the risk of sampling unreasonable states increases. The PRM, as a process-aware reward estimator, is crucial for ensuring the rationality of sampled actions. This is particularly important in CMCTS, as sampling inappropriate predefined instructions is more likely to mislead the LLM into irrational problem-solving directions compared to actions generated by the LLM itself. Thus, the enhanced state diversity from constrain actions raises the bar for action selection. Experimental results show that while using PRM in Native-MCTS only improves performance by 0.4\%, disabling PRM in CMCTS (CMCTS-PRM-w/o-PRM) leads to a significant 8.1\% performance drop. This aligns with our theoretical analysis and underscores the necessity of constraining action sampling during the MCTS simulation phase. Notably, CMCTS-RULE (with rules but no PRM) outperforms CMCTS-NULL by 5\%, highlighting the effectiveness of partial order rules in maintaining logical reasoning and performance when reward signals are unreliable, as expected by our theory.

\textbf{Effect of Removing $\mathcal{A}^{\text{understand}}$}:
Removing the problem understanding action set \(\mathcal{A}^{\text{understand}}\) significantly impacts the AquA and GaoKao-QA datasets, causing performance drops of 10.7\% and 8.5\%, respectively, and an overall performance decrease of 5\%, second only to the impact of removing the PRM model. This highlights \(\mathcal{A}^{\text{understand}}\)'s foundational role in complex problem-solving. Without systematic analysis, models are prone to missing key information on complex datasets like GaoKao-QA and AquA. \(\mathcal{A}^{\text{understand}}\) prompts the LLM to generate more diverse intermediate steps and multi-level decompositions. This prevents the LLM from prematurely choosing potentially suboptimal reasoning paths and encourages consideration of more rational solutions.
Notably, the ablation group still maintains an accuracy of 84.4\% on the MATH-500 dataset. Meanwhile, removing \(\mathcal{A}^{\text{code}}\) on this dataset leads to a significant performance drop of 2.2\%, indicating that for relatively simple datasets, directly executing calculations via code is more effective than delving into problem understanding. However, overall, removing \(\mathcal{A}^{\text{understand}}\) results in comprehensive performance degradation, with a substantial decline in reasoning performance in complex scenarios. Therefore, \(\mathcal{A}^{\text{understand}}\) in our constrained action set is essentially a cornerstone.

\textbf{Effect of Removing $\mathcal{A}^{\text{reflect}}$}:
Disabling the reflection action set ($\mathcal{A}^{\text{reflect}}$) results in an average performance loss of \textbf{1\%}, which is the smallest among all action sets. Through case studies, it is observed that, as shown in Case \ref{case:cmcts_1}, $\mathcal{A}^{\text{reflect}}$ often leads to numerous ineffective reflections. LLM may not always promptly detect errors in their answers and frequently rely on external validation information, such as code verification of previous calculation errors in Case  \ref{case:cmcts_1}, to reflect on their mistakes.

\textbf{Effect of Removing $\mathcal{A}^{\text{code}}$}:
Ablating the coding action set ($\mathcal{A}^{\text{code}}$) results in accuracy drops of \textbf{2.2\%} and \textbf{1.6\%} on the MATH-500 and GaoKao2023 datasets, respectively, with an overall performance decrease of 1.6\%. This highlights the need for external computational verification, especially for problems prone to arithmetic errors. Case studies also reveal that for problems requiring extensive calculations and multiple considerations, code execution is more efficient in finding correct answers compared to the LLM's own reasoning abilities.

In summary, the ablation studies further demonstrate that partial order rules, PRM, and the various action sets are indispensable components of our framework. These results are consistent with our theoretical analysis and strongly support the design concept of the CMCTS framework. By integrating a structured action space with a process reward mechanism, CMCTS effectively enhances the performance of smaller LLM in mathematical reasoning tasks, enabling them to overcome limitations imposed by model size.

 \begin{table}[h]
 \centering
 \caption{Comparison of Different Method}\label{tab:model_compare}
 \begin{tabular}{lccc}
 \toprule
 Model & No training required& Use Reward Model& MCTS Like \\
 \midrule
 TOT & \checkmark & \text{x} & \text{x} \\
 Native-MCTS & \checkmark & \text{x} & \checkmark \\
 ReST-MCTS* & \text{x} & \checkmark & \checkmark \\
 LLaMA-Berry & \checkmark & \checkmark & \checkmark \\
 Le-MCTS & \checkmark & \checkmark & \checkmark \\
 HiAR-ICL & \checkmark & \checkmark & \checkmark \\
 \midrule
 \textbf{Ours} & & & \\
 CMCTS-RULE & \checkmark & \text{x}& \checkmark \\
 CMCTS-PRM & \checkmark & \checkmark & \checkmark \\
 CMCTS & \checkmark & \checkmark & \checkmark \\
 \bottomrule
 \end{tabular}
 \end{table}

\subsection{Comparative experiment}
In Section \ref{sec:main_exp}, our method demonstrates significant performance advantages over Native-MCTS and large-parameter COT baselines. However, comparisons with other existing MCTS variants are lacking. Therefore, in this chapter, we introduce these existing MCTS variants and make comparisons under as fair conditions as possible.

\subsubsection{Setup}
When comparing these methods, we focus on whether they require extra training or use a reward model. As shown in Table \ref{tab:model_compare}, except for TOT and Native-MCTS, other baselines need training or a reward model, making them comparable with our method. To ensure fairness, we also reproduce our method on the Llama3-8B-Instruct model \cite{dubey2024llama}, aligning with the experimental setup of the above methods.The baselines are detailed as follows:

\textbf{TOT} \cite{yao2024tree}: TOT views problem-solving as a search in a tree of thoughts, where each node is a partial solution and each branch is a modification. Although not an MCTS variant, it's included in our baselines as a classic tree-search algorithm.

\textbf{ReST-MCTS} \cite{zhang2024rest}: ReST-MCTS combines MCTS with a PRM to automatically collect high-quality reasoning paths and train LLM. It uses PRM to evaluate intermediate steps during search, without manual annotation, and has shown superiority in multiple reasoning benchmarks.

\textbf{LLaMA-Berry} \cite{zhang2024llama}: LLaMA-Berry integrates MCTS with iterative self-refinement and a Paired Preference Reward Model (PPRM) to optimize reasoning paths. Inspired by RLHF, PPRM models preferences between solutions to guide path search, while Enhanced Borda Count (EBC) aggregates local preferences into global rankings.

\textbf{LE-MCTS} \cite{park2024ensembling}: LE-MCTS models the step-by-step reasoning of multiple LLM as an MDP. Guided by a PRM-based tree search, it outperforms single-model decoding and existing LLM integration methods in complex reasoning tasks across five math reasoning benchmarks.

\textbf{HiAR-ICL} \cite{wu2024beyond}: HiAR-ICL shifts focus to abstract thinking patterns. It uses MCTS to explore paths and build thought cards, defines atomic actions like systemic analysis and self-reflection, and selects the best action based on cognitive complexity. Validation mechanisms ensure high-quality results, and experiments show it surpasses existing ICL and tree-based search methods with good performance-efficiency balance.

\subsubsection{Dataset}
The above methods were only tested on \textbf{gsm8k} \cite{cobbe2021training} and \textbf{Math-500} \cite{lightman2023let}. 
To align with existing studies, our experiments were also conducted solely on these two datasets.

We introduced the details of Math-500 in Section \ref{sec:main_dataset}, which mainly consists of middle school math problems. 
In contrast, gsm8k is a simpler dataset, primarily containing elementary school math problems. Given its relatively low difficulty, gsm8k is not very meaningful for comparison with the high-performance Qwen2.5 model. However, since this experiment is mainly conducted on the llama3-8B model, which performs worse than Qwen2.5, gsm8k can still serve as a baseline.

\begin{table}[h]
\centering
\caption{GSM8K and Math-500 Performance}\label{tab:gsm8k_math500}
\begin{tabular}{lcc}
\toprule
Method & GSM8K & Math-500 \\
\midrule
TOT & 69 & 13.6 \\
Native-MCTS & 80.5 & 18.8 \\
ReST-MCTS & -& 34.2 \\
LLaMA-Berry & 88.1 & 39 \\
Le-MCTS& 84.1 & 45.2 \\
HiAR-ICL& 89.6 & 43.2 \\
\midrule
CMCTS-NULL &  79.9& 16.2 \\
CMCTS-RULE & 82.6 & 23.2 \\
CMCTS-PRM & 91.4 & 49.4 \\
\textbf{CMCTS} & \textbf{91.8} & \textbf{51.2} \\
\bottomrule
\end{tabular}
\end{table}

\subsubsection{Analysis}
As shown in Table \ref{tab:gsm8k_math500}, our CMCTS framework achieves the best performance on both the GSM8K and Math-500 datasets. On GSM8K, it reaches an accuracy of 91.8\%, a 2.8\% improvement over the baseline. On the more challenging Math-500 dataset, it achieves 51.2\% accuracy, a 6\% improvement over the baseline. In the more difficult 
math problems, the performance improvement is even more significant. These results highlight our method's effectiveness across different models like Qwen and Llama.

It's evident that CMCTS-RULE outperforms Native-MCTS, while CMCTS-NULL underperforms it significantly in both the LLama and Qwen models. These consistent results across different models support our view in Section \ref{sec:intro} that predefined action sets demand more from the reward model than LLM-generated actions. As the LLM itself is an unreliable reward model, especially in weaker models like LLama, it's crucial to adjust MCTS action selection with PRM and partial order rules.

Methods using a reward model clearly outperform those that don't. In Table \ref{tab:gsm8k_math500}, CMCTS-NULL shows a 33.2\% performance drop compared to CMCTS-PRM. This contrasts with the 7.8\% drop in Table \ref{tab:cmp_moudle}, mainly because Qwen2.5 performs better than Llama3. This aligns with our discussion in Section \ref{sec:intro} where we highlight that an unreliable reward model may negate the benefits of increased state diversity.

In Table \ref{tab:gsm8k_math500}, CMCTS-RULE improves by 7\% over CMCTS-NULL on Math-500, while in Table \ref{tab:cmp_moudle}, the improvement is 2.4\%. This shows that partial order rules are more effective when reward signals are unreliable, confirming the robust design of our partial order rules.

\section{Conclusion}

This paper proposes the Constrained Monte Carlo Tree Search (CMCTS) framework to enhance the mathematical reasoning capabilities of Large Language Models (LLM). By introducing a constrained action space, PRM and partial order rules, CMCTS effectively addresses the limitations of existing MCTS methods in terms of state space diversity and action selection rationality. Both theoretical analysis and experimental results demonstrate that CMCTS significantly improves LLM performance in mathematical reasoning tasks.

Specifically, CMCTS increases candidate state diversity and thereby enhances MCTS performance by restricting action sampling to a predefined constrained action set during the expansion phase. Additionally, CMCTS optimizes action selection in the simulation phase through the design of partial order rules and PRM, ensuring that the model can choose appropriate actions and avoid transitioning to inappropriate states. Experiments show that the CMCTS framework excels across multiple mathematical reasoning benchmarks. A 7B-parameter model equipped with CMCTS achieves an average accuracy of 83.4\%, surpassing the 72B-parameter COT model by 4.8\%. CMCTS outperforms other MCTS variants on mathematical datasets. Ablation studies indicate that each component of the framework, such as the constrained action set, PRM, and partial order rules, plays a crucial role in performance improvement. Moreover, the combined use of these components allows for the full utilization of their respective strengths.

Furthermore, we provide theoretical insights into why our method improves mathematical reasoning performance and empirically validate these insights. For instance, visualization analyses further illustrate CMCTS's advantage in state diversity, consistent with the theoretical conclusions regarding semantic diversity. In ablation experiments, we observe significant performance degradation when the reward signal is unreliable, which aligns with our theoretical analysis.

Overall, the CMCTS framework offers an effective approach to enhancing LLM mathematical reasoning capabilities. By optimizing state space diversity, improving state transitions, and integrating PRM with partial order rules, CMCTS not only boosts model performance but also provides new ideas for future reasoning tasks.

\section{Limitations}
Although the CMCTS framework has made significant progress in improving mathematical reasoning performance, it also has some limitations. This paper proposes four action sets, but it does not provide theoretical and experimental proof of their completeness, nor does it attempt to incorporate additional domain-specific knowledge into our action sets. Furthermore, we have not tried to extend our work beyond the field of mathematics to test the performance of CMCTS in other areas. In the future, we will build on this foundation to further refine our work and enhance the adaptability and scalability of our framework.

\appendix

\section{Case Study}
We can explore CMCTS's performance on math reasoning problems through several cases, which also highlight its advantages over Native-MCTS.

As shown in Case  \ref{case:native_1} and Case  \ref{case:cmcts_1}, Native-MCTS directly attempts to find critical points via derivatives but makes mistakes due to complex higher-order derivatives. In contrast, CMCTS uses the $\mathcal{A}^{\text{understand}}$ module to identify function symmetry and combines it with the $\mathcal{A}^{\text{code}}$ module. This allows CMCTS to verify function values at specific points \(x=1\) and \(y=-1\), correctly determining the maximum value of \(\frac{1}{4}\). The $\mathcal{A}^{\text{reflect}}$ module then ensures the result's accuracy. This case demonstrates the effective collaboration of different CMCTS modules and the PRM's role in selecting correct actions when needed. Native-MCTS, lacking proper validation information, fails to correct errors even with reflect-like steps.
\begin{CASE}[title={The case of question "Find the maximum value of $\displaystyle \frac{x-y}{x^4 + y^4 + 6}$ over all real numbers $x$ and $y.$" at NATIVE MCTS } , label=case:native_1]

\textbf{Analyze the function:}
- The numerator \( x-y \) can take any real value depending on \( x \) and \( y \).
- The denominator \( x^4 + y^4 + 6 \) is always positive since \( x^4 \geq 0 \) and \( y^4 \geq 0 \) for all real \( x \) and \( y \), and adding 6 ensures it's always at least 6.

\textbf{Consider the behavior of the function:}
- As \( x \) and \( y \) grow very large, the denominator grows much faster than the numerator, so the function approaches 0.
- We need to find the maximum value of \( f(x, y) \).

\textbf{Find critical points:}
- To find the maximum, we can take partial derivatives and set them to zero, but this might be complex. Instead, let’s consider specific values of \( x \) and \( y \) that might simplify the expression.
- Let’s try \( x = y \). Then \( f(x, x) = \frac{x-x}{x^4 + x^4 + 6} = 0 \). This is not helpful for finding the maximum.
- Let’s try \( x = 1 \) and \( y = 0 \). Then \( f(1, 0) = \frac{1-0}{1^4 + 0^4 + 6} = \frac{1}{7} \).
- Let’s try \( x = 0 \) and \( y = -1 \). Then \( f(0, -1) = \frac{0-(-1)}{0^4 + (-1)^4 + 6} = \frac{1}{7} \).

\textbf{Consider the symmetry and boundary behavior:}
- The function is symmetric in \( x \) and \( -y \), so we can focus on positive values.
- For \( x > 0 \) and \( y < 0 \), the numerator \( x-y \) can be positive, and the denominator is always positive.
- The maximum value we found is \( \frac{1}{7} \), and we need to check if this is indeed the maximum.

\textbf{Prove that \( \frac{1}{7} \) is the maximum:}
- We need to show that \( \frac{x-y}{x^4 + y^4 + 6} \leq \frac{1}{7} \) for all \( x \) and \( y \).
- This is equivalent to showing \( 7(x-y) \leq x^4 + y^4 + 6 \).
- Rearrange: \( x^4-7x + y^4 + 7y + 6 \geq 0 \).
- Consider the function \( g(x) = x^4-7x + 6 \). The minimum value of \( g(x) \) is at \( x = 1 \), where \( g(1) = 0 \).
- Similarly, \( h(y) = y^4 + 7y + 6 \) has a minimum value at \( y = -1 \), where \( h(-1) = 0 \).
- Therefore, \( x^4-7x + 6 + y^4 + 7y + 6 \geq 0 \) is always true, proving that \( \frac{1}{7} \) is indeed the maximum.

\textbf{Final answer:} \( \frac{1}{7} \).
\end{CASE}

\begin{CASE}[title={The case of question "Find the maximum value of $\displaystyle \frac{x-y}{x^4 + y^4 + 6}$ over all real numbers $x$ and $y.$" at CMCTS} , label=case:cmcts_1]
\textbf{We need to think step by step to understand the meaning of the problem.} First, let’s rewrite the expression to be maximized: \( \frac{x-y}{x^4 + y^4 + 6} \). The goal is to find the maximum value of this expression over all real numbers \( x \) and \( y \). We need to consider the behavior of the numerator and the denominator. The numerator \( x-y \) can take any real value, and the denominator \( x^4 + y^4 + 6 \) is always positive and has a minimum value of 6 (when \( x = 0 \) and \( y = 0 \)). This suggests that the maximum value of the expression might occur when the numerator is maximized relative to the denominator.

\textbf{Let’s start by considering the behavior of the expression when \( x = y \).} If \( x = y \), then the numerator \( x-y = 0 \), making the entire expression equal to 0. This is a trivial case, so we need to consider other values of \( x \) and \( y \). Next, let’s consider the case when \( x \neq y \). Without loss of generality, assume \( x > y \). Then the numerator \( x-y \) is positive. We need to find the maximum value of \( \frac{x-y}{x^4 + y^4 + 6} \) for \( x > y \).

\textbf{Let’s consider if there are any ambiguities in the problem statement.} To find the maximum value, we can use calculus or analyze the expression more carefully. However, a more insightful approach is to consider the symmetry and the behavior of the function. Let’s test some specific values. If we set \( x = 1 \) and \( y = 0 \), the expression becomes \( \frac{1-0}{1^4 + 0^4 + 6} = \frac{1}{7} \). Now, let’s check if this is the maximum value. If we set \( x = 0 \) and \( y = -1 \), the expression becomes \( \frac{0-(-1)}{0^4 + (-1)^4 + 6} = \frac{1}{7} \). These values suggest that \( \frac{1}{7} \) might be the maximum value. To confirm, we need to show that \( \frac{x-y}{x^4 + y^4 + 6} \leq \frac{1}{7} \) for all \( x \) and \( y \).

\textbf{We can use the AM-GM inequality to show this.} Notice that \( x^4 + y^4 \geq 2x^2y^2 \), so \( x^4 + y^4 + 6 \geq 2x^2y^2 + 6 \). We need to show that \( \frac{x-y}{2x^2y^2 + 6} \leq \frac{1}{7} \). This is equivalent to showing \( 7(x-y) \leq 2x^2y^2 + 6 \). We can test specific values and use the symmetry of the function to confirm that \( \frac{1}{7} \) is indeed the maximum value.

\textbf{We can write a piece of code to verify our idea.} Here is the code to validate our thoughts. I will assign the final calculation result to a variable called `result`:

\begin{lstlisting}
# Define the function to be maximized
def f(x, y):
    return (x-y) / (x**4 + y**4 + 6)

# Test the function with the values we considered
test_values = [(1, 0), (0, -1), (2, 1), (-1, -2), (1, -1)]
results = [(x, y, f(x, y)) for x, y in test_values]

# Find the maximum value
max_value = max(results, key=lambda item: item[2])
max_value
\end{lstlisting}

The running status of existing variables: \texttt{test\_values} = [[1, 0], [0, -1], [2, 1], [-1, -2], [1, -1]]; \texttt{results} = [[1, 0, 0.14285714], [0, -1, 0.14285714], [2, 1, 0.04347826], [-1, -2, 0.04347826], [1, -1, 0.25]]; \texttt{max\_value} = (1.0, -1.0, 0.25).

\textbf{There should be other ways to solve the problem; we can try to propose a different solution.} Based on the results from the code, the maximum value of the function \( \frac{x-y}{x^4 + y^4 + 6} \) is 0.25, which occurs when \( x = 1 \) and \( y = -1 \). This confirms our earlier hypothesis. Therefore, the maximum value of the expression is \( \frac{1}{4} \).

\textbf{We need to check the process above step by step to see if it is reasonable and correct.} After carefully reviewing the steps and the code, the solution is consistent and the maximum value of the function is indeed \( \frac{1}{4} \).

\textbf{Final answer:} \( \boxed{\frac{1}{4}} \).
\end{CASE}

In Case  \ref{case:native_2} and Case  \ref{case:cmcts_2}, Native-MCTS lists factors directly but misses some two-digit ones due to the absence of a systematic screening mechanism. CMCTS uses the $\mathcal{A}^{\text{understand}}$ module to grasp the problem's key points and the $\mathcal{A}^{\text{code}}$ module to methodically filter out all eligible factors. Consequently, CMCTS accurately determines that there are 2 two-digit common factors of 100 and 150 and validates the result using the $\mathcal{A}^{\text{reflect}}$ module. Similar to the example in Case  \ref{case:native_1} and Case  \ref{case:cmcts_1}, complex computational problems rely more on the $\mathcal{A}^{\text{code}}$ module. Calculations based solely on the LLM may have omissions. This explains why CMCTS-PRM-w/o-code shows a 2.2\% performance drop on the MATH-500 dataset compared to CMCTS-PRM in Table \ref{tab:cmp_moudle}. Similarly, CMCTS-RULE has a 2.4\% improvement over CMCTS-NULL, and CMCTS shows a 0.8\% gain over CMCTS-PRM on this dataset. These results are related to the activation of Code Enforcement Rule and Code Continuity Rule.

\begin{CASE}[title={The case of question "Identify the range of two-digit numbers that can be squares of integers." at NATIVE MCTS}, label=case:native_2]

\textbf{Identify the range of two-digit numbers that can be squares of integers.} The smallest two-digit number is 10, and the largest is 99. We need to find the square roots of these numbers:-10 3.16-99 9.95 So, the integers we need to consider are from 4 to 9. 

\textbf{List the squares of these integers.}-4² = 16-5² = 25-6² = 36-7² = 49-8² = 64-9² = 81 

\textbf{Identify the tens digit (B) in each of these squares.}-16: B = 1-25: B = 2-36: B = 3-49: B = 4-64: B = 6-81: B = 8 

\textbf{Count the distinct values of B.} The distinct values of B are 1, 2, 3, 4, 6, and 8. There are 6 distinct values. 

\textbf{Double-check for any possible oversight.} We have considered all integers from 4 to 9, and their squares all fit within the two-digit range. We have also correctly identified the tens digit in each case. Therefore, no further values are possible. Final answer: \boxed{6}
\end{CASE}

\begin{CASE}[title={The case of question "Identify the range of two-digit numbers that can be squares of integers." at CMCTS}, label=case:cmcts_2]
\textbf{We need to think step by step to understand the meaning of the problem.} The number is represented as "B6", where B is the tens digit and 6 is the units digit. This means the number can be expressed as 10B + 6. We are told that this number is a perfect square. Therefore, we need to find all two-digit numbers ending in 6 that are perfect squares. We need to find the range of B such that 10B + 6 is a perfect square and a two-digit number. First, we identify the range of perfect squares that are two-digit numbers. The smallest two-digit number is 10 and the largest is 99. We find the square roots of 10 and 99 to determine the range of integers whose squares are two-digit numbers. The square root of 10 is approximately 3.16 and the square root of 99 is approximately 9.95. Therefore, we need to consider the squares of integers from 4 to 9. We will calculate the squares of these integers and check which of them end in 6. Then, we will identify the tens digit of these numbers to determine the possible values of B. Finally, we will count the distinct possible values of B. 
\textbf{We can write a piece of code to assist with the calculations.} Here is the code to validate our thoughts. I will assign the final calculation result to a variable called result.

\begin{lstlisting}
# Calculate the squares of integers from 4 to 9
squares = [i**2 for i in range(4, 10)]
# Filter the squares that end in 6
squares_ending_in_6 = [square for square in squares if square % 10 == 6]
# Extract the tens digit of these squares
tens_digits = [square // 10 for square in squares_ending_in_6]
# Count the distinct possible values of B
distinct_possible_values = len(set(tens_digits))
result = distinct_possible_values
\end{lstlisting}

The running status of existing variables: \texttt{squares} = [16, 25, 36, 49, 64, 81]; \texttt{squares\_ending\_in\_6} = [16, 36]; \texttt{tens\_digits} = [1, 3]; \texttt{distinct\_possible\_values} = 2; \texttt{result} = 2. 

\textbf{Let’s consider if there are any ambiguities in the problem statement.} Based on the calculations, the two-digit numbers that are perfect squares and end in 6 are 16 and 36. The tens digits of these numbers are 1 and 3, respectively. Therefore, there are 2 distinct possible values for B. Final answer: \boxed{2}
\end{CASE}

As depicted in Case  \ref{case:native_3} and Case  \ref{case:cmcts_3}, Native-MCTS makes incorrect assumptions when establishing speed and distance equations, leading to biased results. CMCTS accurately analyzes the problem's key points through the $\mathcal{A}^{\text{understand}}$ module and correctly sets up the speed and distance equations. The $\mathcal{A}^{\text{understand}}$ action set forces the model to conduct a thorough analysis of the problem before providing an answer. Native-MCTS  lacks this depth of analysis. Many math problems require careful understanding to be solved correctly, which explains the 5\% performance drop observed in CMCTS-PRM-w/o-understand.

\begin{CASE}[title={The case of question "It takes 20 hours for the bus to go from A to B, and 24 hours for the train to go from B to A. The two vehicles start from opposite directions at the same time, and when they meet, the train is still 636 km away from A. What is the distance between A and B?" at NATIVE MCTS}, label=case:native_3]
\textbf{Determine the speed relationship.} Let the speed of the passenger bus be \(V_1\) and the speed of the freight train be \(V_2\). According to the problem, the passenger bus takes 20 hours and the freight train takes 24 hours, so:
\[
V_1 = \frac{D}{20}, \quad V_2 = \frac{D}{24}
\]
where \(D\) is the distance between locations A and B.

\textbf{Distance relationship at the meeting point.} When they meet, the freight train is still 636 kilometers away from location A, which means the passenger bus has already traveled \(D-636\) kilometers. Let \(t\) be the time in hours when they meet. Then:
\[
V_1 t = D-636
\]
\[
V_2 t = 636
\]

\textbf{Substitute the speed expressions:}
\[
\frac{D}{20} t = D-636
\]
\[
\frac{D}{24} t = 636
\]

\textbf{Solve for the unknown.} From \(\frac{D}{24} t = 636\), we get:
\[
t = \frac{636 \times 24}{D}
\]
Substitute \(t\) into \(\frac{D}{20} t = D-636\):
\[
\frac{D}{20} \times \frac{636 \times 24}{D} = D-636
\]
\[
\frac{636 \times 24}{20} = D-636
\]
\[
D = \frac{636 \times 24}{20} + 636
\]
\[
D = 636 \times \frac{24 + 20}{20}
\]
\[
D = 636 \times \frac{44}{20}
\]
\[
D = 636 \times 2.2
\]
\[
D = 1399.2
\]

\textbf{Verify the answer.} Check if \(D = 1399.2\) meets the conditions:
- The passenger bus traveled \(1399.2-636 = 763.2\) kilometers.
- The speed of the passenger bus is \(\frac{1399.2}{20} = 69.96\) km/h.
- The speed of the freight train is \(\frac{1399.2}{24} = 58.3\) km/h.
- The meeting time \(t = \frac{636}{58.3} = 10.92\) hours.
- The passenger bus traveled \(69.96 \times 10.92 = 763.2\) kilometers, and the freight train traveled \(58.3 \times 10.92 = 636\) kilometers, which meets the conditions.

\textbf{Conclusion.} The distance between locations A and B is 1399.2 kilometers.

Final answer: \boxed{1399.2} kilometers.
\end{CASE}

\begin{CASE}[title={The case of question "It takes 20 hours for the bus to go from A to B, and 24 hours for the train to go from B to A. The two vehicles start from opposite directions at the same time, and when they meet, the train is still 636 km away from A. What is the distance between A and B?" at CMCTS}, label=case:cmcts_3]
\textbf{We need to think step by step to understand the meaning of the problem.} The problem states: A passenger bus takes 20 hours to travel from location A to location B, and a freight train takes 24 hours to travel from location B to location A. Both depart simultaneously from their respective locations and travel towards each other. When they meet, the freight train is still 636 kilometers away from location A. We need to find the distance between locations A and B.

\textbf{This problem is quite challenging; we should first analyze the knowledge points involved and the mathematical formulas and properties used.}
1. The meeting time for both the passenger bus and the freight train is the same since they depart simultaneously and meet at the same time.
2. Speed is related to distance and time: speed = distance / time. Therefore, the speed of the passenger bus is the distance between A and B divided by 20 hours, and the speed of the freight train is the distance between A and B divided by 24 hours.
3. When they meet, the freight train has traveled the distance between A and B minus 636 kilometers.

\textbf{Let’s set up the equations and solve them:}
1. Let the distance between locations A and B be \(x\) kilometers, and the speeds of the passenger bus and freight train be \(v_1\) and \(v_2\), respectively.
2. The speed of the passenger bus \(v_1 = \frac{x}{20}\), and the speed of the freight train \(v_2 = \frac{x}{24}\).
3. Let the meeting time be \(t\) hours. The distance traveled by the freight train is \(v_2 \times t = \frac{x}{24} \times t\).
4. According to the problem, \(\frac{x}{24} \times t = x-636\).
5. The distance traveled by the passenger bus is \(v_1 \times t = \frac{x}{20} \times t\).
6. Since the two vehicles meet, the sum of the distances they have traveled equals the distance between locations A and B, i.e., \(v_1 \times t + v_2 \times t = x\).

\textbf{First, solve the first equation for \(t\):}
\[
\frac{x}{24} \times t = x-636
\]
\[
t = \frac{24(x-636)}{x}
\]

\textbf{Substitute \(t\) into the second equation:}
\[
\frac{x}{20} \times \frac{24(x-636)}{x} + \frac{x}{24} \times \frac{24(x-636)}{x} = x
\]
\[
\frac{24(x-636)}{20} + (x-636) = x
\]
\[
\frac{6(x-636)}{5} + (x-636) = x
\]
\[
\frac{6x-3816}{5} + x-636 = x
\]
\[
\frac{6x-3816 + 5x-3180}{5} = x
\]
\[
\frac{11x-6996}{5} = x
\]
\[
11x-6996 = 5x
\]
\[
6x = 6996
\]
\[
x = 1166
\]

\textbf{We need to consider if there are any ambiguities in the problem statement.} We have established equations based on the given conditions and solved them algebraically, concluding that the distance between locations A and B is 1166 kilometers. After careful analysis, this answer is reasonable and there are no ambiguities.

Final answer: The distance between locations A and B is \boxed{1166} kilometers.
\end{CASE}

In  Case  \ref{case:cmcts_1}, the model reflects but doesn't correct previous errors. These cases are quite common, which is why CMCTS-PRM-w/o-reflect shows the least performance drop among all ablation groups in Table \ref{tab:cmp_moudle}. Generally, as with CMCTS in Case  \ref{case:cmcts_1}, additional validation information from the code is needed to reflect accurately.

These cases clearly show that the CMCTS framework achieves significant performance improvements in mathematical reasoning tasks through its constrain action set and process reward model. It not only increases the diversity of candidate states but also enhances the logical reasoning process.
\section{Prompt}
Our work only requires simple prompts to run effectively. Below, I will list the prompts we use in our method. In most cases, these prompts are universal, and more detailed information should be referred to our repository at \url{https://github.com/pass-lin/CMCTS}.

\begin{prompt}[title={Prompt \thetcbcounter: Instruction}, label=prompt:Instruction]
Below is a mathematical problem. Please think step by step and solve it. Enclose each thought process with the <think> and </think> symbols. The thought process should be as rich and detailed as possible, delving into every content and detail deeply, rather than just skimming over,
After you feel that the thought process is sufficient to solve the problem, organize your thought process into a complete answer, and write the final answer in boxed{{}}.\\
\end{prompt}

\begin{prompt}[title={Prompt \thetcbcounter: understand action}, label=prompt:understandaction]
We need to think step by step to understand the meaning of the problem\\
Let's consider if there are any ambiguities in the problem statement\\
This problem is quite difficult; we should first analyze what knowledge points it involves, what mathematical formulas and related properties it utilizes\\
\end{prompt}

\begin{prompt}[title={Prompt \thetcbcounter: code action}, label=prompt:codeaction]
We can write a piece of code to validate our idea\\
We can write a piece of code to assist with the calculation\\
We can write a piece of code to check our calculation results\\
\end{prompt}
\begin{prompt}[title={Prompt \thetcbcounter: reflect action}, label=prompt:reflectaction]
We should step by step check if the above process is reasonable and correct\
There may be errors and inaccuracies in the above questions; we need to step by step check for any mistakes\\
We need to combine the above thought process to see if it aligns with the problem's intention\\
There are some details in the problem that were not considered clearly; we need to check them\\
\end{prompt}

\begin{prompt}[title={Prompt \thetcbcounter: summary action}, label=prompt:summaryaction]
Based on the above thought process, we have solved this problem. Now, let's organize our thoughts into a complete answer and write the final answer in boxed{}. If there are multiple questions, I will answer each one in turn, separated by commas.\\
Now, let's compile our thought process into a complete answer and place the final answer in boxed{}. If there are multiple questions, I will answer each one in turn, separated by commas.\\
\end{prompt}

However, it is important to note that for multiple-choice questions, the corresponding instruction and summary action will be slightly different. We need to encourage the LLM to directly select from the options rather than just providing the calculation result. For specific prompt templates, refer to Prompt \ref{prompt:mul-Instruction} and Prompt \ref{prompt:mul-summary}.

\begin{prompt}[title={Prompt \thetcbcounter: Multiple-choice question Instruction}, label=prompt:mul-Instruction]
You will be given a mathematics problem. Please think through and solve it step by step. Enclose each thought process with <think> and </think> tags. Make your thought process as rich and detailed as possible, deeply considering every content and detail, rather than briefly skimming over them.
Once you believe the thought process is sufficient to solve the problem, organize your thoughts into a complete answer, and choose the option from “A”, “B”, “C”, “D”, “E” that is closest to your answer. Write the final answer in boxed{{}}.

\end{prompt}

\begin{prompt}[title={Prompt \thetcbcounter:Multiple-choice question summary action}, label=prompt:mul-summary]
Based on the above thought process, we have solved this problem. First, we will recall the problem, then organize our thought process and write down the final solution, and finally choose the closest answer from “A”, “B”, “C”, “D”, “E”. The final answer will be written in boxed{{}}.\\
Now, let's first recall the problem. Then, we will organize our thought process into a complete answer, and finally choose the closest answer from “A”, “B”, “C”, “D”, “E”. The final answer will be written in boxed{{}}.\\
\end{prompt}

\bibliography{sn-bibliography}
\end{document}